\newcommand{\tri}{\trianglelefteq}
\LetLtxMacro{\oldpm}{\pm}
\renewcommand{\pm}{$\oldpm$}
\newcommand{\highlight}[1]{%
  \colorbox{gray!15}{$\displaystyle#1$}}
\theoremstyle{definition}
\newtheorem{definition}{Definition}
\theoremstyle{lemma}
\newtheorem{lemma}{Lemma}
\theoremstyle{proposition}
\newtheorem{proposition}{Proposition}
\definecolor{blind1_blue}{HTML}{005AB5}
\definecolor{blind1_red}{HTML}{DC3220}
\definecolor{blind2_blue}{HTML}{1A85FF}
\definecolor{blind2_violet}{HTML}{D41159}
\title{Sheaf Hypergraph Networks}
\author{%
  Iulia 
  Duta \\
  University of Cambridge \\
  \texttt{{id366}@cam.ac.uk} \\
  \And
  Giulia 
  Cassar\`{a} \\
  University of Rome, La Sapienza\\
  \texttt{giulia.cassara@uniroma1.it} \\
  \And
  Fabrizio
  Silvestri \\
  University of Rome, La Sapienza\\
  \texttt{fabrizio.silvestri@uniroma1.it} \\
  \And
  Pietro Li\`{o}\\
  University of Cambridge \\
  \texttt{pl219@cam.ac.uk} \\
}
\begin{document}

\maketitle
\begin{abstract}

Higher-order relations are widespread in nature, with numerous phenomena involving complex interactions that extend beyond simple pairwise connections. As a result, advancements in higher-order processing can accelerate the growth of various fields requiring structured data. Current approaches typically represent these interactions using hypergraphs.
We enhance this representation by introducing cellular sheaves for hypergraphs, a mathematical construction that adds extra structure to the conventional hypergraph while maintaining their local, higher-order connectivity. Drawing inspiration from existing Laplacians in the literature, we develop two unique formulations of sheaf hypergraph Laplacians: linear and non-linear. Our theoretical analysis demonstrates that incorporating sheaves into the hypergraph Laplacian provides a more expressive inductive bias than standard hypergraph diffusion, creating a powerful instrument for effectively modelling complex data structures.
We employ these sheaf hypergraph Laplacians to design two categories of models: Sheaf Hypergraph Neural Networks and Sheaf Hypergraph Convolutional Networks. These models generalize classical Hypergraph Networks often found in the literature. Through extensive experimentation, we show that this generalization significantly improves performance, achieving top results on multiple benchmark datasets for hypergraph node classification.
\end{abstract}

\section{Introduction}


        


The prevalence of relational data in real-world scenarios has led to rapid development and widespread adoption of graph-based methods in numerous domains~\cite{pred_pat_outcome,huang2020skipgnn_molecular_interactions,ying2018graph_recommended,wang2018videos_gupta2}. However, a major limitation of graphs is their inability to represent interactions that goes beyond pairwise relations. In contrast, real-world interactions are often complex and multifaceted.
There is evidence that higher-order relations frequently occur in neuroscience~\cite{Crossley2014TheHO,brain_ho},  chemistry~\cite{jost2018hypergraph_chemistry}, environmental science~\cite{environment_ho,pollinators} and social networks~\cite{social_ho}. 
Consequently, learning powerful and meaningful representations for hypergraphs has emerged as a promising and rapidly growing subfield of deep learning~\cite{HyperNN,ramon_hypergraph,hyper_altzheimer,reidentification_hyper,HEAT,hyper_traffic_pred}. 
%
However, current hypergraph-based models struggle to capture higher-order relationships effectively. As described in~\cite{oversmoothing_hyper}, conventional hypergraph neural networks often suffer from the  problem of over-smoothing. As we propagate the information inside the hypergraph, the representations of the nodes become uniform across neighbourhoods. This effect hampers the capability of hypergraph models to capture local, higher-order nuances. 

%
More powerful and flexible mathematical constructs are required to capture real-world interactions' complexity better. Sheaves provide a suitable enhancement for graphs that allow for more diverse and expressive representations. A cellular sheaf~\cite{sheaf_curry} enables to attach data to a graph, by associating vector spaces to the nodes, together with a mechanism of transferring the information along the edges. This approach allows for richer data representation and enhances the ability to model complex interactions. 

Motivated by the need for more expressive structures, we introduce a \textit{cellular sheaf for hypergraphs}, which allows for the representation of more sophisticated dynamics while preserving the higher-order connectivity inherent to hypergraphs. We take on the non-trivial challenge to generalize the two commonly used hypergraph Laplacians~\cite{nonlin_lap, HyperNN} to incorporate the richer structure sheaves offer. Theoretically, we demonstrate that the diffusion process derived using the \textit{sheaf hypergraph Laplacians} that we propose induces a more expressive inductive bias than the classical hypergraph diffusion. Leveraging this enhanced inductive bias, we construct and test two powerful neural networks capable of inferring and processing hypergraph sheaf structure: the \textit{Sheaf Hypergraph Neural Network} (SheafHyperGNN) and the \textit{Sheaf Hypergraph Convolutional Network} (SheafHyperGCN).

The introduction of the cellular sheaf for hypergraphs expands the potential for representing complex interactions and provides a foundation for more advanced techniques. By generalizing the hypergraph Laplacians with the sheaf structure, we can better capture the nuance and intricacy of real-world data. Furthermore, our theoretical analysis provides evidence that the sheaf hypergraph Laplacians embody a more expressive inductive bias, essential for obtaining strong representations.

\textbf{Our main contributions} are summarised as follow:
\begin{enumerate}
    \item We introduce the \textbf{cellular sheaf for hypergraphs}, a mathematical construct that enhances the hypergraphs with additional structure by associating a vector space with each node and hyperedge, along with linear projections that enable information transfer between them.
    \item We propose both a \textbf{linear} and a \textbf{non-linear sheaf hypergraph Laplacian}, generalizing the standard hypergraph Laplacians commonly used in the literature. We also provide a theoretical characterization of the inductive biases generated by the diffusion processes of these Laplacians, showcasing the benefits of utilizing these novel tools for effectively modeling intricate phenomena.
    \item The two sheaf hypergraph Laplacians are the foundation for \textbf{two novel architectures} tailored for hypergraph processing: \textbf{Sheaf Hypergraph Neural Network} and \textbf{Sheaf Hypergraph Convolutional Network}. Experimental findings demonstrate that these models achieve top results, surpassing existing methods on numerous benchmarking datasets. 
\end{enumerate}


\section{Related work} 

\textbf{Sheaves on Graphs.} 
Utilizing graph structure in real-world data has improved various domains like healthcare~\citep{pred_pat_outcome}, biochemistry~\citep{huang2020skipgnn_molecular_interactions}, social networks~\citep{monti2019fake_news}, recommendation systems~\citep{ying2018graph_recommended}, traffic prediction~\citep{ijcai2018_traffic_stgcn}, with graph neural networks (GNNs) becoming the standard for graph representations. However, in heterophilic setups, when nodes with different labels are likely to be connected, directly processing the graph structure leads to weak performance.
In ~\cite{bodnar_sheaf_diff}, they address this by attaching additional geometric structure to the graph, in the form of cellular sheaves~\cite{sheaf_curry}.

A cellular sheaf on graphs associates a vector space with each node and each edge
together with a linear projection between these spaces for each incident pair. To take into account this more complex geometric structure, SheafNN~\cite{sheafNN} generalised the classical GNNs~\cite{kipf2017semi_gcn,defferrard2016convolutional,DBLP:journals/corr/BrunaZSL13_bruna_lecun_spectral} by replacing the graph Laplacian with a sheaf Laplacian~\cite{hansen_sheaf_lap}. 
Higher-dimensional sheaf-based neural networks are explored, with sheaves either learned from the graph~\cite{bodnar_sheaf_diff} or deterministically inferred for efficiency~\cite{sheaf_conn_laplacian}. Recent methods integrate attention mechanisms~\cite{sheaf_att} or replace propagation with wave equations~\cite{sheaf_wave}. In recent developments, Sheaf Neural Networks have been found to significantly enhance the performance of recommendation systems, as they improve upon the limitations of graph neural networks \citep{purificato2023}.

In the domain of heterogeneous graphs, the concept of learning unique message functions for varying edges is well-established. However, there's a distinction in how sheaf-based methods approach this task compared to heterogeneous methods such as RGCN~\cite{schlichtkrull2017modeling}. Unlike the latter, which learns individual parameters for each kind of incident relationship, sheaf-based methods dynamically predict projections for each relationship, relying on features associated with the node and hyperedge. As a result, the total parameters in sheaf networks do not escalate with an increase in the number of hyperedges. This difference underscores a fundamental shift in paradigm between the two methods.

\textbf{Hypergraph Networks.} Graphs, while useful, have a strong limitation: they represent only pairwise relations. Many natural phenomena involve complex, higher-order interactions~\cite{brain_conectome,rxn_hypergraph,fish_schools,pollinators}, requiring a more general structure like hypergraphs.
Recent deep learning approaches have been developed for hypergraph structures. HyperGNN~\citep{HyperNN} expands the hypergraph into a weighted clique and applies message passing similar to GCNs~\cite{kipf2017semi_gcn}. HNHN~\citep{hnhn} improves this with non-linearities, while HyperGCN~\citep{HyperGCN} connects only the most discrepant nodes using a non-linear Laplacian. Similar to the trend in GNNs, attention models gain popularity also in the hypergraph domain. HCHA~\citep{HCHA} uses an attention-based incidence matrix, computed based on a nodes-hyperedge similarity. Similarly, HERALD~\citep{HERALD} uses a learnable distance to infer a soft incidence matrix. On the other hand, HEAT~\cite{HEAT} creates messages by propagating information inside each hyperedge using Transformers~\cite{transformer}.

Many hypergraph neural network (HNN) methods can be viewed as two-stage frameworks: 1) sending messages from nodes to hyperedges and 2) sending messages back from hyperedges to nodes. Thus, ~\citep{UniGNN} proposes a general framework where the first step is the average operator, while the second stage could use  any existing GNN module. Similarly, ~\citep{allset} uses either DeepSet functions~\cite{DeepSets} or Transformers~\cite{transformer} to implement the two stages, while~\citep{hyper_sage} uses a GNN-like aggregator in both stages, with distinct messages for each (node, hyperedge) pair. 

In contrast, we propose a novel model to improve the hypergraph processing by attaching a cellular sheaf to the hypergraph structure and diffusing the information inside the model according to it. We will first introduce the cellular sheaf for hypergraph, prove some properties for the associated Laplacians, and then propose and evaluate two architectures based on the sheaf hypergraph  Laplacians.

\begin{figure}[t!]
    \centering
    \includegraphics[scale=0.139]{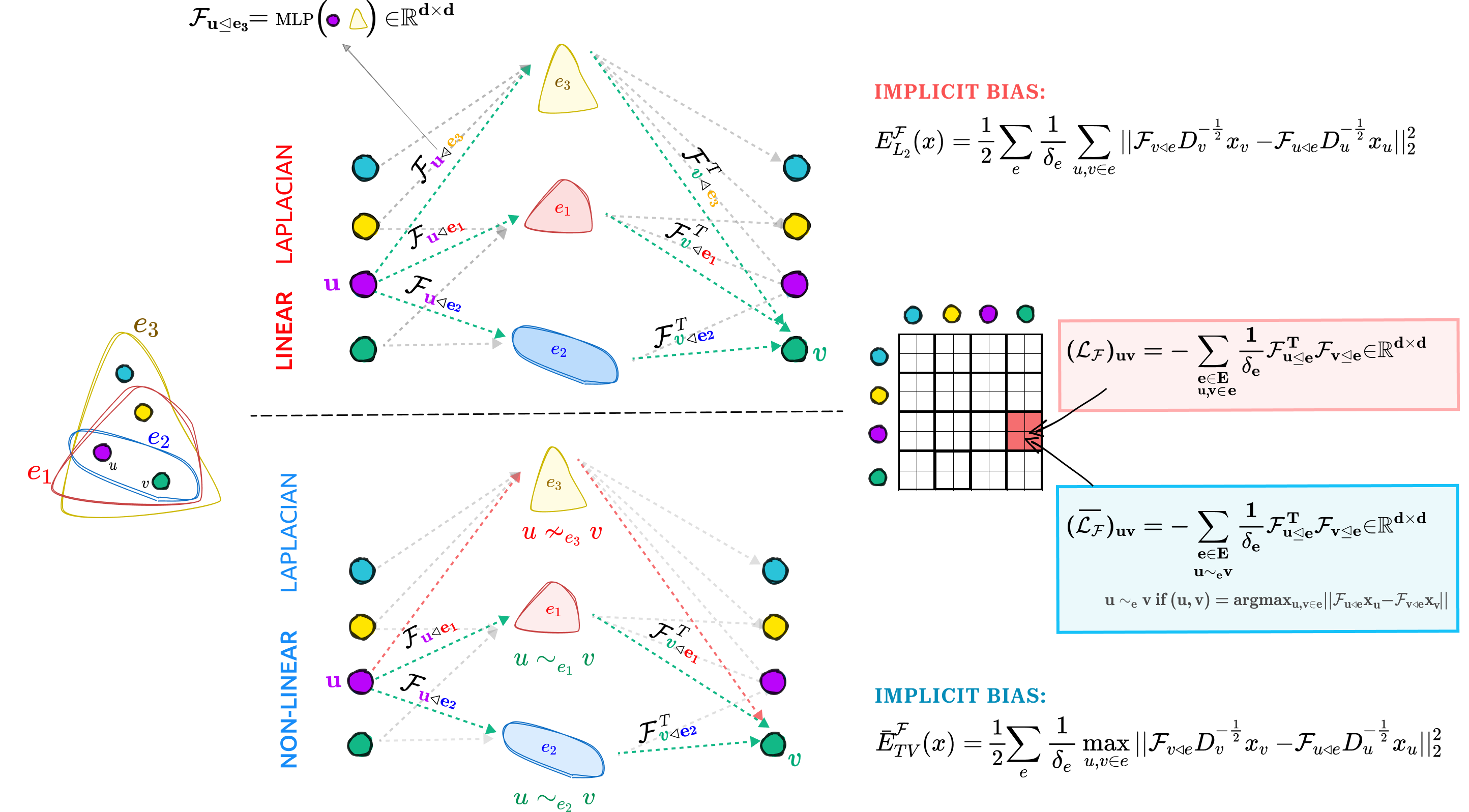}
    \caption{ Visual representation of linear and non-linear sheaf hypergraph Laplacian. (\textbf{Top}) In the linear case, the block matrix $(\mathcal{L_F})_{uv}$  corresponding to the pair of nodes $(u, v)$ accumulates contributions from each hyperedge that simultaneously contains both nodes. (\textbf{Bottom}) In the non-linear version, for each hyperedge, we first select the two nodes that are the most dissimilar in the hyperedge stalk domain: $u \sim_e v$ if $(u,v) = argmax_{u,v \in e} ||\mathcal{F}_{u \triangleleft e}x_u - \mathcal{F}_{v \triangleleft e}x_v||_2^2$. Then, the block matrix $(\mathcal{\bar L}_\mathcal{F})_{uv}$ associated with the pair of nodes $(u, v)$ only accumulates contributions from a hyperedge $e$ if $u \sim_e v$. The two operators (linear and non-linear sheaf hypergraph Laplacian) represent the building blocks for the Sheaf Hypergraph Neural Network and Sheaf Hypergraph Convolutional Network respectively and we theoretically show that they exhibit a more expressive implicit bias compared to the traditional Hypergraph Networks, leading to better performance.}
    \label{fig:fig_laplacian}
\end{figure}

\section{Hypergraph Sheaf Laplacian}

An undirected hypergraph is a tuple $\mathcal H = (V, E)$ where $V=\{1, 2 \dots n\}$ is a set of nodes (also called vertices), and $E$ is a set of hyperedges (also called edges when there is no confusion with the graph edges). Each hyperedge $e$ is a subset of the nodes set $V$. We denote by $n=|V|$ the number of nodes in the hypergraph $\mathcal H$ and by $m=|E|$ the number of hyperedges. In contrast to graph structures, where each edge contains exactly two nodes, in a hypergraph an edge $e$ can contain any number of nodes. The number of nodes in each hyperedge ($|e|$) is called the \textit{degree of the hyperedge} and is denoted by $\delta_e$. In contrast, the number of hyperedges containing each node $v$ is called the \textit{degree of the node} and is denoted by $d_v$. 


Following the same intuition from defining sheaves on graphs~\cite{sheafNN, bodnar_sheaf_diff}, we will introduce the cellular sheaf associated with a hypergraph $\mathcal{H}$.

\begin{definition} \label{def:sheaf}
A \textit{cellular sheaf $\mathcal{F}$ associated with a hypergraph $\mathcal{H}$} is defined as a triple $\langle \mathcal{F}(v),  \mathcal{F}(e), \mathcal{F}_{v \trianglelefteq e} \rangle$, where:

\begin{enumerate}[nosep]
    \item $\mathcal{F}(v)$ are \textit{vertex stalks:} vector spaces associated with each node $v$;
    \item $\mathcal{F}(e)$ are \textit{hyperedge stalks:} vector spaces associated with each hyperedge $e$; 
    \item $\mathcal{F}_{v \trianglelefteq e}: \mathcal{F}(v) \rightarrow \mathcal{F}(e)$ are \textit{restriction maps:} linear maps between each pair $v \trianglelefteq e$, if hyperedge $e$ contains node $v$.
\end{enumerate}
\end{definition}

In simpler terms, a sheaf associates a space with each node and each hyperedge in a hypergraph and also provides a linear projection that enables the movement of representations between nodes and hyperedges, as long as they are adjacent. Unless otherwise specified, we assign the same d-dimensional space for all vertex stalks $\mathcal{F}(v) = \mathbb{R}^d$ and all hyperedge stalks $\mathcal{F}(e) = \mathbb{R}^d$. We refer to $d$ as the dimension of the sheaf.

Previous works focused on creating hypergraph representations by relying on various methods of defining a Laplacian for a hypergraph. In this work, we will concentrate on two definitions: a linear version of the hypergraph Laplacian as used in~\cite{HyperNN}, and a non-linear version of the hypergraph Laplacian as in~\cite{HyperGCN}. We will extend both of these definitions to incorporate the hypergraph sheaf structure, analyze the advantages that arise from this, and propose two different neural network architectures based on each one of them. For a visual comparison between the two proposed sheaf hypergraph Laplacians, see Figure~\ref{fig:fig_laplacian}.

\subsection{Linear Sheaf Hypergraph Laplacian}




\begin{definition}
\label{def:sheaf_linear_lap}
Following the definition of a cellular sheaf on hypergraphs, we introduce the \textit{linear sheaf hypergraph Laplacian} associated with a hypergraph $\mathcal{H}$ as
$(\mathcal{L_F})_{vv} = \sum\limits_{e; v \in e} \frac{1}{\delta_e}\mathcal{F}^T_{v \trianglelefteq e}\mathcal{F}_{v \trianglelefteq e} \in \mathbb{R}^{d \times d}$ 
and 
$(\mathcal{L_F})_{uv} = -\sum\limits_{e; u,v \in e} \frac{1}{\delta_e} \mathcal{F}^T_{u \trianglelefteq e}  \mathcal{F}_{v \trianglelefteq e} \in \mathbb{R}^{d \times d}$, where $\mathcal{F}_{v \trianglelefteq e}: \mathbb{R}^d \rightarrow \mathbb{R}^d$ represents the linear restriction maps guiding the flow of information from node $v$ to hyperedge $e$.

\end{definition}
The linear sheaf Laplacian operator for node $v$ applied on a signal $x \in \mathbb{R}^{n \times d}$ can be rewritten as:
\begin{equation}
\label{linear_sheaf_laplacian}
\mathcal{L_F}(x)_v = \sum_{\substack{e;v \in e}} \frac{1}{\delta_e} \mathcal{F}^T_{v \trianglelefteq e}(\sum_{\substack{u \in e \\ u \neq v}} (\mathcal{F}_{v \trianglelefteq e}x_v - \mathcal{F}_{u \trianglelefteq e}x_u)).
\end{equation}

When each hyperedge contains exactly two nodes (thus $\mathcal{H}$ is a graph), the internal summation will contain a single term, and we recover the sheaf Laplacian for graphs as formulated in~\cite{bodnar_sheaf_diff}.

On the other hand, for the trivial sheaf, when the vertex and hyperedge stalks are both fixed to be $\mathbb{R}$ and the restriction map is the identity $\mathcal{F}_{v \trianglelefteq e} = 1$ we recover the usual linear hypergraph Laplacian ~\cite{HyperNN,hyper_att} defined as $\mathcal{L}(x)_v = \sum_{e;v \in e} \frac{1}{\delta_e} \sum_{u \in e} (x_v - x_u)$.  
However, when we allow for higher-dimensional stalks $\mathbb{R}^d$, the restriction maps for each adjacency pair $(v, e)$ become linear projections $\mathcal{F}_{v \trianglelefteq e} \in \mathbb{R}^{d \times d}$, enabling us to model more complex propagations, customized for each incident (node, hyperedge) pairs.

In the following sections, we will demonstrate the advantages of using this sheaf hypergraph diffusion instead of the usual hypergraph diffusion.

\textbf{Reducing energy via linear diffusion.}
Previous work~\cite{oversmoothing_hyper} demonstrates that diffusion using the classical symmetric normalised version of the hypergraph Laplacian $\Delta = D^{-\frac{1}{2}} \mathcal{L} D^{-\frac{1}{2}}$, where $D$ is a diagonal matrix containing the degrees of the vertices, reduces the following energy function: $E_{L_2}(x) =  \frac{1}{2}\sum_e \frac{1}{\delta_e}\sum_{u,v \in e}||d_v^{-\frac{1}{2}}x_v - d_u^{-\frac{1}{2}}x_u||_2^2$. Intuitively, this means that applying diffusion using the \textit{linear hypergraph Laplacian} leads to similar representations for neighbouring nodes. While this is desirable in some scenarios, it may cause poor performance in others, a phenomenon known as over-smoothing~\cite{oversmoothing_hyper}. In the following, we show that applying diffusion using the linear \textit{sheaf} hypergraph Laplacian addresses these limitations by implicitly minimizing a more expressive energy function. This allows us to model phenomena that were not accessible using the usual Laplacian.

\begin{definition}[]
We define \textit{sheaf Dirichlet energy} of a signal $x \in \mathbb{R}^{n \times d}$ \textit{on a hypergraph} $\mathcal{H}$ as:
\begin{equation*}
    E^{\mathcal{F}}_{L_2}(x) =  \frac{1}{2}\sum_e \frac{1}{\delta_e}\sum_{u,v \in e}||\highlight{\mathcal{F}_{v \tri e}}D_v^{-\frac{1}{2}}x_v -\highlight{\mathcal{F}_{u \tri e}}D_u^{-\frac{1}{2}}x_u||_2^2,
\end{equation*}
where $D_v = \sum\limits_{e; v \in e} \mathcal{F}^T_{v \trianglelefteq e}\mathcal{F}_{v \trianglelefteq e}$ is a normalisation term equivalent to the nodes degree $d_v$ for the trivial sheaf and $D = diag(D_1, D_2 \dots D_n)$ the corresponding block diagonal matrix.
\end{definition}

This quantity measures the discrepancy between neighbouring nodes in the hyperedge stalk domain as opposed to the usual Dirichlet energy for hypergraphs that, instead, measures this distance in the node features domain. In the following, we are showing that, applying hypergraph diffusion using the linear sheaf Laplacian implicitly reduces this energy.


\begin{proposition}[]
\label{prop:linear_min_energ}
 The diffusion process using a symmetric normalised version of the linear sheaf hypergraph Laplacian minimizes the sheaf Dirichlet energy of a signal $x$ on a hypergraph $\mathcal{H}$. Moreover, the energy decreases with each layer of diffusion. 
\end{proposition}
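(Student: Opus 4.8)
The plan is to show that the sheaf Dirichlet energy is, up to a global constant, the quadratic form induced by the symmetric normalised linear sheaf hypergraph Laplacian $\Delta_\mathcal{F} = D^{-\frac{1}{2}}\mathcal{L_F}D^{-\frac{1}{2}}$, and then to identify the diffusion iteration $x^{(t+1)} = x^{(t)} - \beta\,\Delta_\mathcal{F}\,x^{(t)}$ as gradient descent on that energy. Once $E^{\mathcal{F}}_{L_2}$ is written as a positive semi-definite quadratic form, both assertions of the proposition follow from standard facts about gradient descent on a convex quadratic.

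First I would absorb the normalisation by setting $y_v = D_v^{-\frac{1}{2}}x_v$ (equivalently $y = D^{-\frac{1}{2}}x$), so that $E^{\mathcal{F}}_{L_2}(x) = \tfrac{1}{2}\sum_e \tfrac{1}{\delta_e}\sum_{u,v\in e}\|\mathcal{F}_{v\tri e}y_v - \mathcal{F}_{u\tri e}y_u\|_2^2$. Expanding each squared norm as $\|\mathcal{F}_{v\tri e}y_v\|^2 + \|\mathcal{F}_{u\tri e}y_u\|^2 - 2\langle\mathcal{F}_{v\tri e}y_v,\, \mathcal{F}_{u\tri e}y_u\rangle$ and summing over the pairs inside each hyperedge, the diagonal contributions collect into blocks $\mathcal{F}^T_{v\tri e}\mathcal{F}_{v\tri e}$ and the cross terms into blocks $\mathcal{F}^T_{v\tri e}\mathcal{F}_{u\tri e}$. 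Matching these against Definition~\ref{def:sheaf_linear_lap} yields $E^{\mathcal{F}}_{L_2}(x) = \langle y, \mathcal{L_F}\,y\rangle = \langle x, \Delta_\mathcal{F}\,x\rangle$, where the last equality uses that $D^{-\frac{1}{2}}$ is block-diagonal and symmetric. This representation also shows at once that $\mathcal{L_F}$, and hence $\Delta_\mathcal{F}$, is symmetric — $(\mathcal{L_F})_{uv}^T = (\mathcal{L_F})_{vu}$ by inspection — and positive semi-definite, since the quadratic form equals a sum of squared norms and is therefore nonnegative for every signal.

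With the energy in this form, its gradient is $\nabla_x E^{\mathcal{F}}_{L_2}(x) = 2\,\Delta_\mathcal{F}\,x$, so a step of normalised sheaf diffusion is exactly a gradient-descent step on $E^{\mathcal{F}}_{L_2}$, up to a reparametrisation of the step size. The minimisers of the energy are precisely the signals in $\ker\Delta_\mathcal{F}$ (the sheaf analogue of harmonic signals), which are the fixed points of the diffusion, proving the first claim. For the monotone-decrease claim I would substitute the update into the quadratic form and use symmetry of $\Delta_\mathcal{F}$ to obtain $E^{\mathcal{F}}_{L_2}(x^{(t+1)}) \le E^{\mathcal{F}}_{L_2}(x^{(t)}) - \beta\,(2 - \beta\,\lambda_{\max})\,\|\Delta_\mathcal{F}\,x^{(t)}\|_2^2$, where $\lambda_{\max}$ is the largest eigenvalue of $\Delta_\mathcal{F}$. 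Hence for any step size $\beta < 2/\lambda_{\max}$ the energy strictly decreases unless $\Delta_\mathcal{F}x^{(t)} = 0$, i.e. unless $x^{(t)}$ already minimises the energy.

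I expect the only delicate part to be the bookkeeping in the algebraic identity of the second step: one must track the $\tfrac{1}{\delta_e}$ weights, the per-node normalisation $D_v$, and the treatment of the $u=v$ terms in $\sum_{u,v\in e}$ carefully, since these determine the exact diagonal blocks and hence whether the quadratic form reproduces $\mathcal{L_F}$ on the nose or only up to a constant (the off-diagonal blocks match Definition~\ref{def:sheaf_linear_lap} directly, but the diagonal depends on the self-loop convention). The convexity and descent arguments of the third step are then routine, with the admissible step-size range being the only quantity that needs to be made explicit.
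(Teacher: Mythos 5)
Your overall strategy is sound and its first half coincides with the paper's: the paper's Lemma~1 is exactly your identity $E^{\mathcal{F}}_{L_2}(x) = x^T\Delta^{\mathcal{F}}x$, obtained by the same expand-the-square bookkeeping (and the delicacy you flag about the diagonal blocks versus the $\tfrac{1}{\delta_e}$ weights is real --- the paper glosses over it too). Where you diverge is in the second half: the paper diagonalises $\Delta^{\mathcal{F}}$ and shows $E^{\mathcal{F}}_{L_2}((I-\Delta^{\mathcal{F}})x) = \sum_i c_i^2\lambda_i(1-\lambda_i)^2 \le \lambda_* E^{\mathcal{F}}_{L_2}(x)$ with $\lambda_* = \max_{i:\lambda_i\neq 0}(1-\lambda_i)^2 < 1$, which yields a geometric contraction rate, whereas your descent-lemma argument yields only monotone decrease. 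Both framings are fine for the proposition as literally stated.

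However, there is a genuine gap. The proposition is about the specific diffusion $Y = (I - \Delta^{\mathcal{F}})X$, i.e.\ gradient descent with fixed step size $\beta = 1$ (up to your factor of $2$ in the gradient). Your conclusion is conditional: the energy decreases \emph{for any $\beta < 2/\lambda_{\max}$}, and you never verify that $\beta = 1$ lies in this range. For a generic positive semi-definite quadratic form this can fail --- if $\lambda_{\max} > 2$ the unit-step iteration increases the energy --- so the claim does not follow from convexity alone. The missing ingredient is a spectral bound on $\Delta^{\mathcal{F}}$, and this is precisely the content of the paper's Lemma~2, which shows $\lambda_i \in [0,1]$. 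That bound is not automatic: it depends on the specific sheaf normalisation $D_v = \sum_{e: v\in e}\mathcal{F}^T_{v\tri e}\mathcal{F}_{v\tri e}$, and is proved via the Rayleigh quotient, bounding each cross term $-\langle \mathcal{F}_{v\tri e}D_v^{-1/2}x_v, \mathcal{F}_{u\tri e}D_u^{-1/2}x_u\rangle$ by $\tfrac{1}{2}(\|\mathcal{F}_{v\tri e}D_v^{-1/2}x_v\|^2 + \|\mathcal{F}_{u\tri e}D_u^{-1/2}x_u\|^2)$, counting the $\delta_e - 1$ partners of each node so the $\tfrac{1}{\delta_e}$ prefactors cancel, and finally using $\sum_e \mathcal{F}^T_{v\tri e}\mathcal{F}_{v\tri e} = D_v$ to collapse the bound to $\langle x, x\rangle$. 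Without this step (or at least the weaker bound $\lambda_{\max} < 2$), your proof does not establish that the paper's diffusion operator decreases the energy; with it, your descent-lemma route closes and in fact recovers the paper's conclusion.
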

 Concretely, defining the diffusion process as $Y = (I - \Delta^{\mathcal{F}})X$ where $\Delta^{\mathcal{F}} = D^{-\frac{1}{2}}\mathcal{L^F}D^{-\frac{1}{2}}  \in \mathbb{R}^{nd \times nd}$ represents the symmetric normalised version of the linear sheaf hypergraph Laplacian, we have that $E^{\mathcal{F}}_{L_2}(Y) < \lambda_{*}E^{\mathcal{F}}_{L_2}(X)$, with $\{\lambda_i\}$ the non-zero eigenvalues of $\Delta^{\mathcal{F}}$ and $\lambda_* = \max_{i}{\{(1-\lambda_i)^2\}} < 1$. All the proofs are in the Supplementary Material.

This result addresses some of the limitations of standard hypergraph processing. First, while classical diffusion using hypergraph Laplacian brings closer representations of the nodes in the nodes space ($x_v$, $x_u$), our linear sheaf hypergraph Laplacian allows us to bring closer representations of the nodes in the more complex space associated with the hyperedges ($\mathcal{F}_{v \tri e}x_v$, $\mathcal{F}_{u \tri e}x_u$). This encourages a form of hyperedge agreement, while preventing the nodes to become uniform. Secondly, in the hyperedge stalks, each node can have a different representation for each hyperedge it is part of, leading to a more expressive processing compared to the classical methods. Moreover, in many Hypergraph Networks, the hyperedges uniformly aggregate information from all its components. Through the presence of a restriction map for each (node, hyperedge) pair, we enable the model to learn the individual contribution that each node sends to  each hyperedge.  

From an opinion dynamic perspective~\cite{opinion_dyn} when the hyperedges represent group discussions, the input space $x_v$ can be seen as the private opinion, while the hyperedge stalk $\mathcal{F}_{v \tri e}x_v$ can be seen as a public opinion (what an individual $v$ decide to express in a certain group $e$). Minimizing the \textit{Dirichlet energy} creates private opinions that are in consensus inside each hyperedge, while minimizing the \textit{sheaf Dirichlet energy} creates an \textit{apparent} consensus, by only uniformizing the expressed opinion. Through our sheaf setup, each individual is allowed to express varying opinion in each group it is part of, potentially different than their personal belief. This introduces a realistic scenario inaccessible in the original hypergraph diffusion setup.



\subsection{Non-Linear Sheaf Hypergraph Laplacian}

Although the linear hypergraph Laplacian is commonly used to process hypergraphs, it falls short in fully preserving the hypergraph structure~\cite{hol_graphs}.
To address these shortcomings, ~\cite{total_var_hyper} introduces the non-linear Laplacian, demonstrating that its spectral properties are more suited for higher-order processing compared to the linear Laplacian. For instance,  compared to the linear version, the non-linear Laplacian leads to a more balanced partition in the minimum cut problem, a task known to be tightly related to the semi-supervised node classification. 
Additionally, while the linear Laplacian associates a clique for each hyperedge, the non-linear one offers the advantage of relying on a much sparser connectivity.
We will adopt a similar methodology to derive the non-linear version of the sheaf hypergraph Laplacian and analyze the benefits of applying diffusion using this operator.





\begin{definition}\label{def:sheaf_nonlinear_lap} We introduce the \textit{non-linear sheaf hypergraph Laplacian} of a hypergraph $\mathcal{H}$ with respect to a signal $x$ as following:

\begin{enumerate}
    \item For each hyperedge $e$, compute  $(u_e, v_e) = argmax_{u,v \in e} ||\mathcal{F}_{u \triangleleft e}x_u - \mathcal{F}_{v \triangleleft e}x_v||$, the set of pairs containing the nodes with the most discrepant features in the hyperedge stalk. 
    \item Build an undirected graph $\mathcal{G}_H$ containing the same sets of nodes as $\mathcal{H}$  and, for each hyperedge $e$ connects the most discrepant nodes $(u, v)$ (from now on we will write $u \sim_e v$ if they are connected in the $\mathcal{G}_H$ graph due to the hyperedge e). If multiple pairs have the same maximum discrepancy, we will randomly choose one of them. 
    \item Define the sheaf non-linear hypergraph Laplacian as:
        \begin{equation} \label{eq_def_non_linear_sheaf_lap}
        \mathcal{\bar L_F}(x)_v = \sum_{e; u \sim_e v}\frac{1}{\delta_e}\mathcal{F}^T_{v \trianglelefteq e}\big(\mathcal{F}_{v \trianglelefteq e}x_v-\mathcal{F}_{u \trianglelefteq e}x_u \big).
        \end{equation}
\end{enumerate}
\end{definition}
Note that the sheaf structure impacts the non-linear diffusion in two ways: by shaping the graph structure creation (Step 1), where the two nodes with the greatest distance in the hypergraph stalk are selected rather than those in the input space; and by influencing the information propagation process (Step 3). When the sheaf is restricted to the trivial case ($d=1$ and $\mathcal{F}_{v \trianglelefteq e}=1$) this corresponds to the non-linear sheaf Laplacian of a hypergraph as introduced in~\cite{total_var_hyper}.

\paragraph{Reducing total variation via non-linear diffusion.} 

In the following discussion, we will demonstrate how transitioning from a linear to a non-linear sheaf hypergraph Laplacian alters the energy guiding the inductive bias. This phenomenon was previously investigated for the classical hypergraph Laplacian, with~\cite{total_var_hyper} revealing enhanced expressivity in the non-linear case.

\begin{definition} We define  \textit{the sheaf total variation} of a signal $x \in \mathbb{R}^{n \times d}$ \textit{on a hypergraph} $\mathcal{H}$ as:

\begin{equation*}
\bar E^{\mathcal{F}}_{TV}(x) =  \frac{1}{2}\sum_e\frac{1}{\delta_e}\max_{u,v \in e}||\highlight{\mathcal{F}_{v \tri e}}D_v^{-\frac{1}{2}}x_v -\highlight{\mathcal{F}_{u \tri e}}D_u^{-\frac{1}{2}}x_u||_2^2,
\end{equation*}
where $D_v = \sum\limits_{e; v \in e} \mathcal{F}_{v \trianglelefteq e}^T\mathcal{F}_{v \trianglelefteq e}$ is a normalisation term equivalent to the node's degree in the classical setup and $D = diag(D_1, D_2 \dots D_n)$ is the corresponding block diagonal matrix.
\end{definition}

 This quantity generalised the total variance (TV) $ \bar E_{TV}(x) =  \frac{1}{2}\sum_e \frac{1}{\delta_e}\max_{u,v \in e}||d_v^{-\frac{1}{2}}x_v -d_u^{-\frac{1}{2}}x_u||_2^2$ minimized in the non-linear hypergraph label propagation~\cite{total_var_hyper,pmlr-v70-zhang17d}. Unlike the TV,  the sheaf total variation measures the highest discrepancy at the hyperedge level computed in the hyperedge stalk, as opposed to the TV, which gauges the highest discrepancy in the feature space. We will explore the connection between the sheaf TV and our \textit{non-linear sheaf hypergraph diffusion}.

\begin{proposition}[]
 The diffusion process using the symmetric normalised version of non-linear sheaf hypergraph Laplacian minimizes the sheaf total variance of a signal $x$ on hypergraph $\mathcal{H}$. 
\end{proposition}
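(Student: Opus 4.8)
The plan is to reduce the non-linear statement to the linear one (Proposition~\ref{prop:linear_min_energ}) by exploiting the fact that the $\max$ appearing in $\bar E^{\mathcal{F}}_{TV}$ is realized exactly by the pairs selected in Step~1 of Definition~\ref{def:sheaf_nonlinear_lap}. First I would observe that, at the current signal $x$, for every hyperedge $e$ the inner quantity $\max_{u,v\in e}\|\mathcal{F}_{v\tri e}D_v^{-1/2}x_v-\mathcal{F}_{u\tri e}D_u^{-1/2}x_u\|_2^2$ equals the squared sheaf-difference of the selected pair $(u_e,v_e)$ with $u_e\sim_e v_e$. Consequently,
\begin{equation*}
\bar E^{\mathcal{F}}_{TV}(x)=\frac{1}{2}\sum_{e}\frac{1}{\delta_e}\|\mathcal{F}_{v_e\tri e}D_{v_e}^{-1/2}x_{v_e}-\mathcal{F}_{u_e\tri e}D_{u_e}^{-1/2}x_{u_e}\|_2^2,
\end{equation*}
which is precisely the sheaf Dirichlet energy of $x$ restricted to the auxiliary graph $\mathcal{G}_H$ of Step~2, where each hyperedge contributes a single edge $u_e\sim_e v_e$ of weight $1/\delta_e$ carrying the restriction maps inherited from $\mathcal{F}$.

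Next I would verify the exact non-linear analogue of the gradient identity underlying the linear proposition. Treating the selection as momentarily fixed and differentiating the quadratic form above with respect to $x_v$, the factor $\tfrac12$ cancels the factor $2$ from the squared norm and yields $\partial\bar E^{\mathcal{F}}_{TV}/\partial x_v=D_v^{-1/2}\mathcal{\bar L_F}(D^{-1/2}x)_v$, i.e. $\nabla\bar E^{\mathcal{F}}_{TV}(x)=\bar\Delta^{\mathcal{F}}x$ with $\bar\Delta^{\mathcal{F}}=D^{-1/2}\mathcal{\bar L_F}D^{-1/2}$ the symmetric normalised non-linear sheaf Laplacian of Equation~\eqref{eq_def_non_linear_sheaf_lap}. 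The only point to check is that the sum $\sum_{e;u\sim_e v}$ in $\mathcal{\bar L_F}(x)_v$ runs exactly over the edges incident to $v$ in $\mathcal{G}_H$, so that the non-linear Laplacian coincides term-by-term with the sheaf Laplacian of $\mathcal{G}_H$. Given this, the diffusion step $Y=(I-\bar\Delta^{\mathcal{F}})X$ is a unit-step gradient descent on $\bar E^{\mathcal{F}}_{TV}$, and the same spectral argument as in Proposition~\ref{prop:linear_min_energ} (applied to the quadratic form whose Hessian is $\bar\Delta^{\mathcal{F}}$) shows that one step reduces the fixed-selection energy, which at $X$ equals $\bar E^{\mathcal{F}}_{TV}(X)$.

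I expect the main obstacle to lie in two places. The first is that, unlike the linear case, the selection $u_e\sim_e v_e$ is itself a function of $x$, so $\mathcal{G}_H$ may change after a step and the fixed-selection quadratic is only a local linearization; to close this I would argue that $\bar E^{\mathcal{F}}_{TV}$ is a finite maximum of convex quadratics, hence convex and locally Lipschitz, with $\bar\Delta^{\mathcal{F}}x$ in its Clarke subdifferential and equal to the genuine gradient wherever each hyperedge's argmax pair is unique. On the dense open set where all argmaxes are unique the selection is locally constant, so along the (infinitesimal) gradient flow $\tfrac{d}{dt}\bar E^{\mathcal{F}}_{TV}(x(t))=-\|\bar\Delta^{\mathcal{F}}x\|_2^2\le 0$ and the energy strictly decreases; the measure-zero tie configurations are handled by convexity. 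The second delicate point is the bookkeeping of the normalisation: since $D_v$ sums over \emph{all} hyperedges containing $v$ rather than only those selected into $\mathcal{G}_H$, one must confirm that the eigenvalues of $\bar\Delta^{\mathcal{F}}$ still lie in $[0,2)$ — which follows because $D$ dominates the $\mathcal{G}_H$-degree matrix in the Loewner order, so the generalized Rayleigh quotient $x^\top\mathcal{\bar L_F}x/x^\top D x$ is bounded by its $\mathcal{G}_H$-normalised counterpart and hence below $2$, giving $\lambda_*<1$ exactly as in the linear setting.
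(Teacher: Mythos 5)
Your proposal is correct and follows essentially the same route as the paper: both identify $\bar E^{\mathcal{F}}_{TV}(x)$ with the quadratic form $\tfrac12 x^T\bar\Delta^{\mathcal{F}}x$ built from the currently selected pairs, establish convexity as a pointwise maximum of convex functions, show that $\bar\Delta^{\mathcal{F}}x$ is a (Clarke) subgradient --- the genuine gradient wherever each hyperedge's argmax is unique, with ties handled by the subdifferential rule for maxima --- and conclude via subgradient descent. Your extra spectral observation only controls the fixed-selection quadratic (since the argmax pairs can migrate after a step it does not yield a monotone decrease of $\bar E^{\mathcal{F}}_{TV}$ itself, as you note), which is consistent with the paper claiming per-layer decrease only in the linear case.
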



Despite the change in the potential function being minimized, the overarching objective remains akin to that of the linear case: striving to achieve a coherent consensus among the representations within the hyperedge stalk space, rather than generating uniform features for each hyperedge in the input space. In contrast to the linear scenario, where a quadratic number of edges is required for each hyperedge, the non-linear sheaf hypergraph Laplacian associates a single edge with each hyperedge, thereby enhancing computational efficiency.

\subsection{Sheaf Hypergraph Networks}
Popular hypergraph neural networks~\cite{hyper_att, HyperNN, HyperGCN,wang2022equivariant} draw inspiration from a variety of hypergraph diffusion operators~\cite{hol_graphs,total_var_hyper,Saito_2018}, giving rise to diverse message passing techniques. These techniques all involve the propagation of information from nodes to hyperedges and vice-versa. We will adopt a similar strategy and introduce the Sheaf Hypergraph Neural Network and Sheaf Hypergraph Convolutional Network, based on two message-passing schemes inspired by the sheaf diffusion mechanisms discussed in this paper.

Given a hypergraph $\mathcal{H}=(V,E)$ with nodes characterised by a set of features $ X \in \mathbb{R}^{n \times f}$, we initially linearly project the input features into $\tilde{X} \in \mathbb{R}^{n \times (df)}$ and then reshape them into $ \tilde{X} \in \mathbb{R}^{nd \times f}$. As a result, each node is represented in the vertex stalk as a matrix $\mathbb{R}^{d \times f}$, where $d$ denotes the dimension of the vertex stalk, and $f$ indicates the number of channels. 

A general layer of Sheaf Hypergraph Network is defined as: 
\begin{equation*}
Y = \sigma((I_{nd}-\overset{\bullet}\Delta)(I_n \otimes W_1 )\tilde XW_2).
\end{equation*}
Here, $\overset{\bullet}\Delta$ can be either $\Delta^{\mathcal{F}}=D^{-\frac{1}{2}} \mathcal{L}^{\mathcal{F}} D^{-\frac{1}{2}} $ for the \textit{linear} sheaf hypergraph Laplacian introduced in Eq.~\ref{linear_sheaf_laplacian} or $\bar\Delta^{\mathcal{F}}=D^{-\frac{1}{2}} \bar{\mathcal{L}}^{\mathcal{F}} D^{-\frac{1}{2}}$ for the \textit{non-linear} sheaf hypergraph Laplacian introduced in Eq.~\ref{eq_def_non_linear_sheaf_lap}. Both $W_1 \in \mathbb{R}^{d \times d}$ and $W_2 \in \mathbb{R}^{f \times f}$ are learnable parameters, while $\sigma$ represents ReLU non-linearity.\\

\textbf{Sheaf Hypergraph Neural Network} (SheafHyperGNN). This model utilizes the \textit{linear} sheaf hypergraph Laplacian $\overset{\bullet}\Delta = \Delta^{\mathcal{F}}$. When the sheaf is trivial ($d=1$ and $\mathcal{F}_{v \trianglelefteq e}=1$), and $W_1=\mathbf{I}_d$, the SheafHyperGNN is equivalent to the conventional HyperGNN architecture~\cite{HyperNN}. However, by increasing dimension $d$ and adopting dynamic restriction maps, our proposed SheafHyperGNN becomes more expressive. For every adjacent node-hyperedge pair $(v,e)$, we use a $d \times d$ block matrix to discern each node's contribution instead of a fixed weight that only stores the incidence relationship. The remaining operations are similar to those in HyperGNN~\cite{HyperNN}. More details on how the block matrices $\mathcal{F}_{v \trianglelefteq e}$ are learned can be found in the following subsection.

\textbf{Sheaf Hypergraph Convolutional Network} (SheafHyperGCN).
This model employs the non-linear Laplacian $\overset{\bullet}\Delta = \bar\Delta^{\mathcal{F}}$. Analogous to the linear case, when the sheaf is trivial and $W_1=\mathbf{I}_d$ we obtain the classical HyperGCN architecture~\cite{HyperGCN}. In our experiments, we will use an approach similar to that in~\cite{HyperGCN} and adjust the Laplacian to include mediators. This implies that we will not only connect the two most discrepant nodes but also create connections between each node in the hyperedge and these two most discrepant nodes, resulting in a denser associated graph. For more information on this variation, please refer to~\cite{HyperGCN} or Supplementary Material.

In summary, the models introduced in this work, SheafHyperGNN and SheafHyperGCN serve as generalisations of the classical HyperGNN~\cite{HyperNN} and HyperGCN~\cite{HyperGCN}. These new models feature a more expressive implicit regularisation compared to their traditional counterparts. 

\begin{table}[t]
  \caption{\textbf{Performance on a collection of hypergraph benchmarks.} 
  Our models using sheaf hypergraph Laplacians demonstrate a clear advantage over their counterparts using classical Laplacians (HyperGNN and HyperGCN). Compared to other recent methods, SheafHyperGNN and SheafHyperGCN achieve competitive performance and attain state-of-the-art results in five of the datasets.
  }
  \label{tab:sota}
  \centering
  \resizebox{\columnwidth}{!}{%
  \begin{tabular}{ccccccccc}
    \toprule
     Name      & Cora & Citeseer & Pubmed & Cora\_{CA} & DBLP\_{CA} & Senate & House & Congress \\
    \midrule
     HCHA & 79.14 \pm 1.02 & 72.42 \pm 1.42 & 86.41 \pm 0.36 & 82.55 \pm 0.97 & 90.92 \pm 0.22  & 48.62 \pm 4.41 & 61.36 \pm 2.53 &  90.43 \pm 1.20 \\
     HNHN & 76.36 \pm 1.92 & 72.64 \pm 1.57  & 86.90 \pm 0.30 &  77.19 \pm 1.49 & 86.78 \pm 0.29 & 50.93 \pm 6.33 & 67.8 \pm 2.59 & 53.35 \pm 1.45 \\
     AllDeepSets & 76.88 \pm 1.80 & 70.83 \pm 1.63 & 88.75 \pm 0.33 & 81.97 \pm 1.50 & 91.27 \pm 0.27 & 48.17 \pm 5.67 & 67.82 \pm 2.40 & 91.80 \pm 1.53 \\
     AllSetTransformers & 78.58 \pm 1.47 & 73.08 \pm 1.20 & 88.72 \pm 0.37 & 83.63 \pm 1.47 & 91.53 \pm 0.23 & 51.83 \pm 5.22 & 69.33 \pm 2.20 & 92.16 \pm 1.05 \\
     UniGCNII & 78.81 \pm 1.05 & 73.05 \pm 2.21 & 88.25 \pm 0.33  & 83.60 \pm 1.14 &  91.69 \pm 0.19 & 49.30 \pm 4.25 & 67.25 \pm 2.57 & 94.81 \pm 0.81 \\
     HyperND & 79.20 \pm 1.14 &  72.62 \pm 1.49 & 86.68 \pm 0.43 & 80.62 \pm 1.32 & 90.35 \pm 0.26 & 52.82 \pm 3.20 & 51.70 \pm 3.37 & 74.63 \pm 3.62 \\
     
     ED-HNN & 80.31 \pm 1.35 & 73.70 \pm 1.38 & \textbf{\color{black}{89.03 \pm 0.53}} & 83.97 \pm 1.55 & \textbf{\color{black}{91.90 \pm 0.19}} & 64.79 \pm 5.14 & 72.45 \pm 2.28 & \textbf{\color{black}{95.00 \pm 0.99}} \\
     \midrule
     HyperGCN\footnotemark & 78.36 \pm 2.01 & 71.01 \pm 2.21 & 80.81 \pm 12.4 & 79.50 \pm 2.11 & 89.42 \pm 0.16* & 51.13 \pm 4.15 & 69.29 \pm 2.05 & 89.67 \pm 1.22 \\
     SheafHyperGCN & 80.06 \pm 1.12 & 73.27 \pm 0.50 & 87.09 \pm 0.71 & 83.26 \pm 1.20 & 90.83 \pm 0.23 & 66.33 \pm 4.58 & 72.66 \pm 2.26 & 90.37 \pm 1.52 \\
     \midrule
     HyperGNN & 79.39 \pm 1.36 & 72.45 \pm 1.16  & 86.44 \pm 0.44 & 82.64 \pm 1.65 & 91.03 \pm 0.20 & 48.59 \pm 4.52 & 61.39 \pm 2.96 & 91.26 \pm 1.15 \\
     SheafHyperGNN & \textbf{\color{black}{81.30} \pm 1.70} &  \textbf{\color{black}{74.71 \pm 1.23}} & 87.68 \pm 0.60 & \textbf{\color{black}{85.52 \pm 1.28}}  & 91.59 \pm 0.24 & \textbf{\color{black}{68.73 \pm 4.68}} & \textbf{\color{black}{73.84 \pm 2.30}} & 91.81 \pm 1.60 \\
    \bottomrule
  \end{tabular}%
  }
\end{table}
\footnotetext[1]{Results where rerun compared to~\cite{wang2022equivariant} using the same hyperparameters, to fix an existing issue in the original code.}

\paragraph{Learnable Sheaf Laplacian.}\label{learn_reduction}
A key advantage of Sheaf Hypergraph Networks lies in attaching and processing a more complex structure (sheaf) instead of the original standard hypergraph. Different sheaf structures can be associated with a single hypergraph, and accurately modeling the most suitable structure is crucial for obtaining effective and meaningful representation. In our proposed models, we achieve this by designing learnable restriction maps. 
For a d-dimensional sheaf, we predict the restriction maps for each pair of incident (vertex v, hyperedge e) as $ \mathcal{F}_{v \trianglelefteq e} = \text{MLP}(x_v || h_e) \in \mathbb{R}^{d^2}$, where $x_v$ represent node features of $v$, and $h_e$ represents features of the hyperedge $e$. This vector representation is then reshaped into a $d \times d$ block matrix representing the linear restriction map for the $(v,e)$ pair. 
When hyperedge features $h_e$ are not provided, any permutation-invariant operation can be applied to obtain hyperedge features from node-level features. We experiment with three types of $d \times d$ block matrices: diagonal, low-rank and general matrices, with the diagonal version consistently outperforming the other two. These restriction maps are further used to define the sheaf hypergraph Laplacians (Def.~\ref{def:sheaf_linear_lap}, or ~\ref{def:sheaf_nonlinear_lap}) used in the final Sheaf Hypergraph Networks. 
Please refer to the Supplementary Material for more details on how we constrain the restriction maps.

\section{Experimental Analysis}

We evaluate our model on eight real-world datasets that vary in domain, scale, and heterophily level and are commonly used for benchmarking hypergraphs. These include Cora, Citeseer, Pubmed, Cora-CA, DBLP-CA~\cite{HyperGCN},  House~\cite{house_data}, Senate and Congress~\cite{senate_congress_data}. To ensure a fair comparison with the baselines, we follow the same training procedures used in~\cite{wang2022equivariant} by randomly splitting the data into $50\%$ training samples, $25\%$ validation samples and $25\%$ test samples, and running each model $10$ times with different random splits. We report average accuracy along with the standard deviation. 

Additionally, we conduct experiments on a set of synthetic heterophilic datasets inspired by those introduced by~\cite{wang2022equivariant}. Following their approach, we generate a hypergraph using the contextual hypergraph stochastic block model~\cite{contextual_block_model,contextual_block_model2,contextual_block_model3}, containing $5000$ nodes: half belong to class $0$ while the other half to class $1$. We then randomly sample $1000$ hyperedges with a cardinality $15$, each containing exactly $\beta$ nodes from class $0$. The heterophily level is computed as $\alpha = \text{min}(\beta, 15-\beta$). Node features are sampled from a label-dependent Gaussian distribution with a standard deviation of $1$. As the original dataset is not publicly available, we generate our own set of datasets by varying the heterophily level $\alpha \in \{ 1 \dots 7 \}$ and rerun their experiments for a fair comparison. 

The experiments are executed on a single NVIDIA Quadro RTX 8000 with 48GB of GPU memory. Unless otherwise specified, our results represent the best performance obtained by each architecture using hyper-parameter optimisation with random search. Details on all the model choices and hyper-parameters can be found in the Supplementary Material. 


    

\textbf{Laplacian vs Sheaf Laplacian.} As demonstrated in the previous section, SheafHyperGNN and SheafHyperGCN are generalisations of the standard HyperGNN~\cite{HyperNN} and HyperGCN~\cite{HyperGCN}, respectively. They transition from the trivial sheaf ($d=1$ and $\mathcal{F}_{v \triangleleft e}=1$) to more complex structures ($d \geq1$ and $\mathcal{F}_{v \triangleleft e}$ a $d \times d$ learnable projection). The results in Table~\ref{tab:sota} and Table~\ref{tab:hetero_dataset} show that both models significantly outperform their counterparts on all tested datasets. Among our models, the one based on linear Laplacian (SheafHyperGNN) consistently outperforms the model based on non-linear Laplacian (SheafHyperGCN) across all datasets. This observation aligns with the performance of the models based on standard hypergraph Laplacian, where HyperGCN is outperformed by HyperGNN in all but two real-world datasets, despite their theoretical advantage~\cite{total_var_hyper}.

\textbf{Comparison to recent methods.} We also compare to several recent models from the literature such as HCHA~\cite{HCHA}, HNHN~\cite{hnhn}, AllDeepSets~\cite{allset}, AllSetTransformer~\cite{allset}, UniGCNII~\cite{ijcai21-UniGNN}, HyperND~\cite{hyperND}, and ED-HNN~\cite{wang2022equivariant}. Our models achieve competitive results on all real-world datasets, with state-of-the art performance on five of them (Table~\ref{tab:sota}). These results confirm the advantages of using the sheaf Laplacians for processing hypergraphs. We also compare our models against a series baselines on the synthetic heterophilic dataset. The results are shown in Table~\ref{tab:hetero_dataset}. Our best model, SheafHyperGNN, consistently outperforms the other models across all levels of heterophily. Note that, our framework enhancing classical hypergraph processing with sheaf structure is not restricted to the two traditional models tested in this paper (HyperGNN and HyperGCN). Most of the recent state-of-the-art methods, such as ED-HNN, could be easily adapted to learn and process our novel cellular sheaf hypergraph instead of the standard hypergraph, leading to further advancement in the hypergraph field. 

\begin{table}[t]
  \caption{{\textbf{Ablation study on Restriction Maps}: we explore three types of $d \times d$ restriction maps: diagonal, low-rank and general. Diagonal matrices consistently achieve better accuracy on most of the datasets, demonstrating a superior balance between complexity and expressivity
  }
  }
  \label{tab:ablation_red_map}
  \centering
  \resizebox{\columnwidth}{!}{%
  \begin{tabular}{ccccccccc}
    \toprule
     Name      & Cora & Citeseer & Pubmed & Cora\_{CA} & DBLP\_{CA} & Senate & House & Congress \\
     \toprule
        Diag-SheafHyperGCN & 80.06 \pm 1.12 & 73.27 \pm 0.50 & 87.09 \pm 0.71 & 83.26 \pm 1.20 & 90.83 \pm 0.23 & 66.33 \pm 4.58 & 72.66 \pm 2.26 & 90.37 \pm 1.52 \\
     LR-SheafHyperGCN & 78.70 \pm 1.14 & 72.14 \pm 1.09 & 86.99 \pm 0.39  & 82.61 \pm 1.28 & 90.84 \pm 0.29 & 66.76 \pm 4.58 & 70.70 \pm 2.23 & 84.88 \pm 2.31\\
     Gen-SheafHyperGCN & 79.13 \pm 0.85 & 72.54 \pm 2.3 & 86.90 \pm 0.46 & 82.54 \pm 2.08 & 90.57 \pm 0.40 & 65.49 \pm 5.17 & 71.05 \pm 2.12 & 82.14 \pm 2.81 \\
    \midrule
    Diag-SheafHyperGNN & 81.30 \pm 1.70 &  74.71 \pm 1.23 & 87.68 \pm 0.60 & 85.52 \pm 1.28  & 91.59 \pm 0.24 & 68.73 \pm 4.68 & 73.62 \pm 2.29 & 91.81 \pm 1.60 \\
     LR-SheafHyperGNN & 76.65 \pm 1.41 & 74.05 \pm 1.34 &  87.09 \pm 0.25 & 77.05 \pm 1.00 & 85.13 \pm 0.29 & 68.45 \pm 2.46 & 73.84 \pm 2.30 & 74.83 \pm 2.32 \\
     Gen-SheafHyperGNN & 76.82 \pm 1.32 & 74.24 \pm 1.05 & 87.35 \pm 0.34 & 77.12 \pm 1.14 & 84.99 \pm 0.39 & 68.45 \pm 4.98 & 69.47 \pm 1.97 & 74.52 \pm 1.27\\
    \bottomrule
\vspace{-8mm}
  \end{tabular}%
  }
\end{table}
\begin{figure}[t]
\begin{minipage}{0.47\textwidth}
\centering
    \small
    \thickmuskip=0mu
    \caption{\textbf{Impact of Depth and Stalk Dimension} 
    evaluated on the heterophilic dataset ($\alpha~=~7$). SheafHyperGNN's performance is unaffected by increasing depth, and high-dimensional stalks is essential for achieving top performance. The Dirichlet energy shows that, while HyperGNN enforces the nodes to be similar, our SheafHyperGNN does not suffer from this limitation, encouranging features diversity. 
    }
    \vspace{-2mm}
     \includegraphics[width=1.0\textwidth]{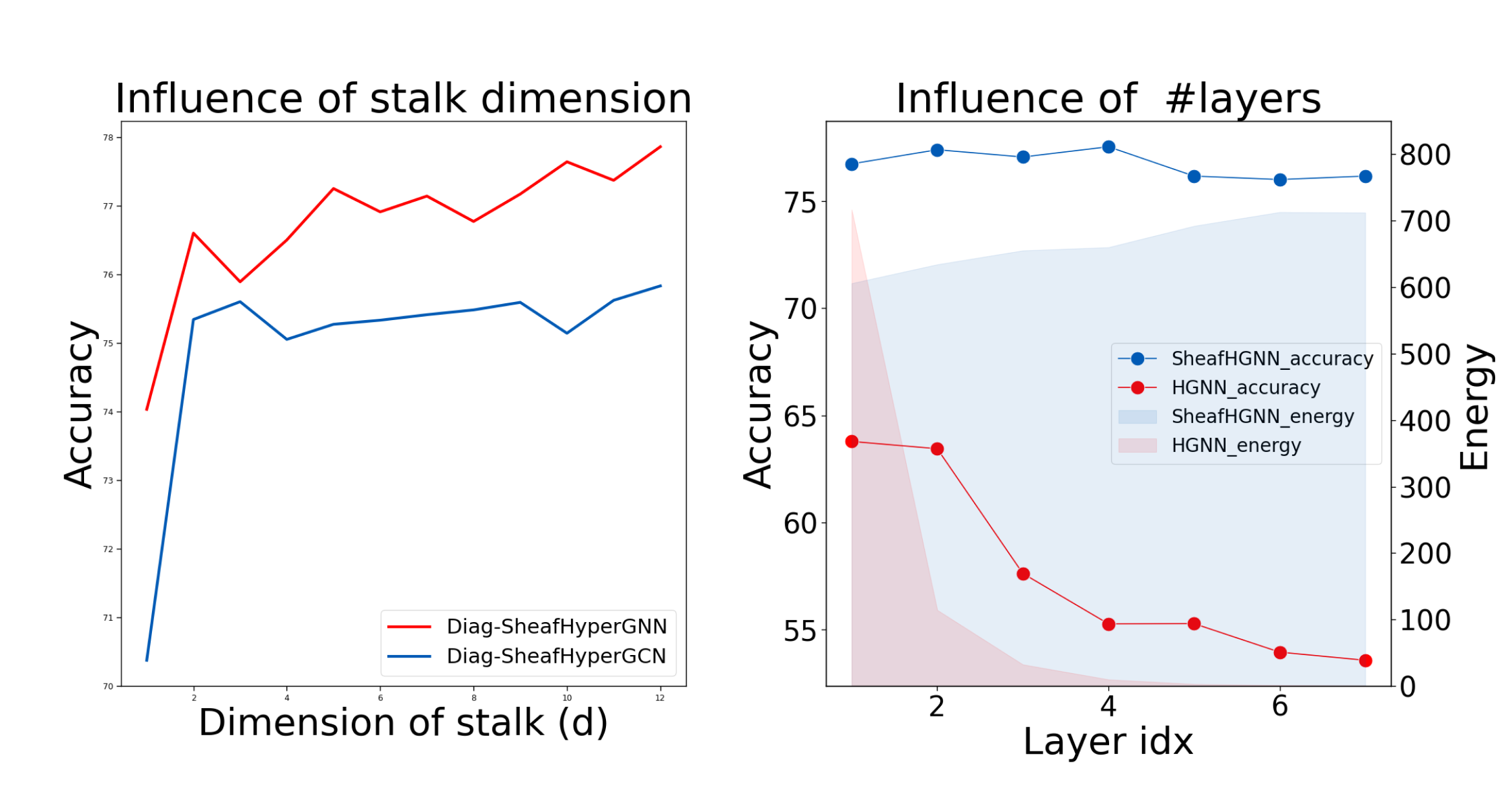}
     \label{fig:ablations_stack_dim}
\end{minipage}
\hfill
\begin{minipage}{0.5\textwidth}
\centering
    \tiny
    \vspace{-11mm}
    \captionof{table}{\textbf{Accuracy on Synthetic Datasets with Varying Heterophily Levels}
    : Across all different level of heterophily ($\alpha$), our sheaf-based methods SheafGCN and SheafHGNN consistently outperform their counterparts. Additionally, they achieve top results for all heterophily levels, further demonstrating their effectiveness. For each experiment, the result represents average accuracy over $10$ runs.\\
    }
  \label{tab:hetero_dataset}
  \begin{tabular}{cccccccc}
    \toprule
    & \multicolumn{7}{c}{heterophily ($\alpha$)}\\
     Name      & $1$ &  $2$ & $3$ &  $4$ &  $5$ &  $6$ & $7$  \\
    \midrule
    HyperGCN  & 83.9  &	69.4 & 72.9 & 75.9 & 70.5 & 67.3 & 66.5 \\ 
    HyperGNN  &  98.4 &  83.7 & 79.4 & 74.5 & 69.5 & 66.9 & 63.8 \\ 
    HCHA  &  98.1 &  81.8 & 78.3 & 75.88 & 74.1 & 71.1 & 70.8 \\ 
    ED-HNN & 99.9 & 91.3 & 88.4 & 84.1 & 80.7 & 78.8 & 76.5 \\ 
    \midrule
    SheafHGCN  &  \textbf{100} &	87.1 & 84.8 & 79.2 & 78.1 & 76.6 & 75.5 \\ 
    SheafHGNN  &  \textbf{100} &	\textbf{94.2} & \textbf{90.8} & \textbf{86.5} & \textbf{82.1} & \textbf{79.8} & \textbf{77.3} \\ 
    \bottomrule  
    
  \end{tabular}%
\end{minipage}
\vspace{-10mm}
\end{figure}

In the following sections, we conduct a series of ablation studies to gain a deeper understanding of our models. We will explore various types of restriction maps, analyze how performance changes when varying the network depth and study the importance of stalk dimension for final accuracy.

\textbf{Investigating the Restriction Maps.} Both linear and non-linear sheaf hypergraph Laplacians rely on attaching a sheaf structure to the hypergraph. For a cellular sheaf $\mathcal{F}$ with vertex stalks $\mathcal{F}(v) = \mathbb{R}^d$ and hyperedge stalks $\mathcal{F}(e) = \mathbb{R}^d$ as used in our experiments, this involves inferring the restriction maps $\mathcal{F}_{v \triangleleft e} \in \mathbb{R}^{d \times d}$ for each incidence pair $(v, e)$. We implement these as a function dependent on corresponding nodes and hyperedge features: $ \mathcal{F}_{v \trianglelefteq e} = \text{MLP}(x_v || h_e) \in \mathbb{R}^{d^2}$. Learning these matrices can be challenging; therefore, we experimented with adding constraints to the type of matrices used as restriction maps. In Table~\ref{tab:ablation_red_map} we show the performance obtained by our models when constraining the restriction maps to be either diagonal (Diag-SheafHyperNN), low-rank (LR-SheafHyperNN) or general matrices (Gen-SheafHyperNN). We observe that the sheaves equipped with diagonal restriction maps perform better than the more general variations. We believe that the advantage of the diagonal restriction maps is due to easier optimization, which overcomes the lose in expressivity. More details about predicting constrained $d \times d$ matrices can be found in the Supplementary Material.

\textbf{Importance of Stalk Dimension.} The standard hypergraph Laplacian corresponds to a sheaf Laplacian with $d=1$ and $\mathcal{F}_{v \trianglelefteq e}=1$. Constraining the stalk dimension to be $1$, but allowing the restriction maps to be dynamically predicted, becomes similar to an attention mechanism~\cite{HCHA}. However, attention models are restricted to guiding information via a scalar probability, thus facing the same over-smoothing limitations as traditional HyperGNN in the heterophilic setup. Our d-dimensional restriction maps increase the model's expressivity by enabling more complex information transfer between nodes and hyperedges, tailored for each individual pair. We validate this experimentally on the synthetic heterophilic dataset, using the diagonal version of the models, which achieves the best performance in the previous ablation. In Figure~\ref{fig:ablations_stack_dim}, we demonstrate how performance significantly improves when allowing higher-dimensional stalks ($d > 1$). These results are consistent for both linear sheaf Laplacian-based models (SheafHyperGNN) and non-linear ones (SheafHyperGCN).

\textbf{Influence of Depth.} It is well-known that stacking many layers in a hypergraph network can lead to a decrease in model performance, especially in the heterophilic setup. This phenomenon, called over-smoothing, is well-studied in both graph~\cite{oversmoothing_graph} and hypergraph literature~\cite{oversmoothing_hyper}. To analyse the extent to which our model suffers from this limitation, we train a series of models on the most heterophilic version of the synthetic dataset ($\alpha=7$). For both SheafHyperGNN and its HyperGNN equivalent, we vary the number of layers between $1-8$. In Figure~\ref{fig:ablations_stack_dim}, we observe that while HyperGNN exhibits a drop in performance when going beyond $3$ layers, SheafHyperGNN's performance remains mostly constant. Similar results were observed for the non-linear version when comparing SheafHyperGCN with HyperGCN (results in Supplementary Material). These results indicates potential advantages of our models in the heterophilic setup by allowing the construction of deeper architectures.

\textbf{Investigating Features Diversity.} Our theoretical analysis shows that, while conventional Hypergraph Networks tend to produce similar features for neighbouring nodes, our Sheaf Hypergraph Networks reduce the distance between neighbouring nodes in the more complex hyperedge stalk space. As a result, the nodes' features do not become uniform, preserving their individual identities.  We empirically evaluate this,  by computing the Dirichlet energy for HyperGNN and SheafHyperGNN (shaded area in Figure~\ref{fig:ablations_stack_dim}), as a measure of similarity between neighbouring nodes. The results are aligned with the theoretical analysis: while increasing depth in HyperGNN creates uniform features, SheafHyperGNN does not suffer from this limitation, encouraging diversity between the nodes. 

\section{Conclusion}
In this paper we introduce the cellular sheaf for hypergraphs, an expressive tool for modelling higher-order relations build upon the classical hypergraph structure. Furthermore, we propose two models capable of inferring and processing the sheaf hypergraph structure, based on linear and non-linear sheaf hypergraph Laplacian, respectively. We prove that the diffusion processes associated with these models induce a more expressive implicit regularization, extending the energies associated with standard hypergraph diffusion. This novel architecture generalizes classical Hypergraph Networks, and we experimentally show that it outperform existing methods on several datasets. Our technique of replacing the hypergraph Laplacian with a sheaf hypergraph Laplacian in both HyperGNN and HyperGCN establishes a versatile framework that can be employed to "sheafify" other hypergraph architectures. 
We believe that sheaf hypergraphs can contribute to further advancements in the rapidly evolving hypergraph community, extending far beyond the results presented in this work.

\paragraph{Contribution Statement} I.D. developed the models, derived all theoretical results, contributed to the design and execution of the experiments and to the writing of the manuscript. G.C. assisted with dataset preparation and experiment execution. P.L. and F.S. supervised the project and provided critical feedback. All authors contributed to refining the manuscript.

\paragraph{Acknowledgment}
The authors would like to thank Ferenc Husz\'{a}r for fruitful discussions and constructive suggestions during the development of the paper and Eirik Fladmark and Laura Brinkholm Justesen for fixing a minor issue in the original HyperGCN code, which led to improved results in the baselines. Iulia Duta is a PhD student funded by a Twitter scholarship. This work was also supported by PNRR MUR projects PE0000013-FAIR, SERICS (PE00000014), Sapienza Project FedSSL, and IR0000013-SoBigData.it.

\bibliographystyle{unsrt}
\bibliography{arxiv}



%
\appendix



\newpage

\section*{\centering\LARGE\bf Appendix: Sheaf Hypergraph Networks}\vspace{12mm}

In this appendix, we delve into various aspects related to the methodology, including broader impact and potential limitations, proofs for the theoretical results, technical details of the models, and additional experiments mentioned in the main paper. The code associated with the paper will be released soon.

\begin{itemize}
    \item \textbf{Section A} explores the social impact and limitations of our approach, along with a discussion on potential future improvements for the model. 
    \item \textbf{Section B} provides proofs for the two primary results presented in the main paper.
    \item \textbf{Section C} offers further information about the method and its training process.
    \item \textbf{Section D} presents additional ablation studies concerning the performance of SheafHyperGCN as the depth increases. We also provide a comprehensive version of the synthetic heterophilic experiments featured in Table $3$ of the main paper, including the standard deviation for all experiments.
\end{itemize}

\section{Broader Impact \& Limitations}

In this paper we introduce a framework that enhances hypergraphs with additional structure, called cellular hypergraph sheaf, and examine the benefits that arise from this approach. Both theoretically and empirically, we demonstrate that this design choice results in a more expressive method for processing higher-order relations compared to its counterpart. Our model is designed as a generic method for higher-order processing without specific components tailored for particular tasks. We test our model on standard benchmark datasets previously used in the literature, as well as synthetically generated datasets created for academic purposes. Consequently, we believe that our paper does not contribute to any specific negative social impacts compared to other hypergraph models.

Our objective is to better incorporate the existing structure in the data. While higher-order connectivity is provided to us in the form of hypergraph structure, we lack access to a ground-truth sheaf associated with the data. Our experiments suggest that jointly learning this structure with the classification task yields top results on several benchmarks. However, without a ground-truth object, it is difficult to determine if the optimisation process results in the ideal sheaf structure. This phenomenon is evident in our experiments comparing different types of restriction maps. Although theoretically less expressive, the diagonal restriction maps generally outperform the low-rank and general ones. This can be attributed to two potential causes: 1) a simpler, diagonal sheaf may be sufficiently complex for our downstream task, or 2) a more easily optimized sheaf predictor needs to be developed to further enhance performance. It is worth noting that similar behavior has been observed in literature for graph-based sheaf models, with the diagonal sheaf consistently achieving good results.

Furthermore, our predictor heavily depends on the quality of the node and hyperedge features. The benchmark datasets we use do not provide hyperedge features, necessitating their inference based on node features. We believe that developing more effective methods for extracting hypergraph features, along with improved techniques for learning the associated sheaf structure, can unlock the full potential of our model and yield even better results.

Our theoretical framework focuses solely on characterizing the expressivity, demonstrating that SheafHNN can model a broader range of functions than classical HNN. It is worth noting, however, that having a more expressive model does not always ensure better generalizability on test data. Despite this, our empirical results consistently show an advantage of using SheafHNN over HNN across all the tested datasets, which hints that the model can perform well in terms of generalization. Nonetheless, we currently lack any theoretical analyses to substantiate this conclusion.






\begin{figure}[t!]
    \centering
     \vspace{12mm}
    \includegraphics[scale=0.22]{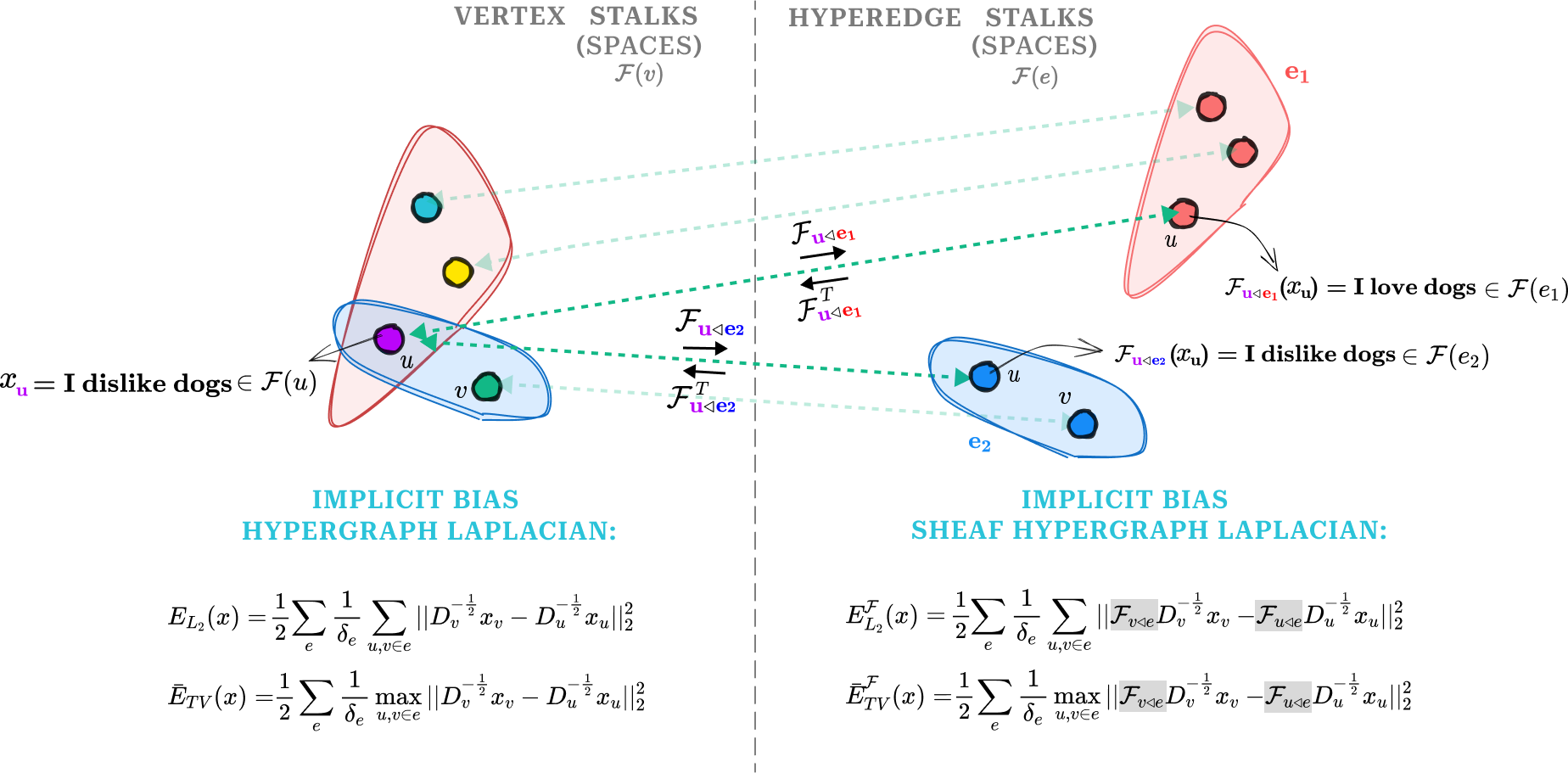}
    \caption{\textbf{Visualisation of a hypergraph sheaf and the inductive biases associated with its Laplacian.} For a hypergraph, each node has associated a vertex stalk $F(v)=\mathbb{R}^d$, and each hyperedge has associated a hyperedge stalk $F(e)=\mathbb{R}^d$. For each incident pair $(v, e)$  we can move from the vertex stalk $F(v)$ to the hyperedge stalk $F(e)$ via a linear map $\mathcal{F}_{v \triangleleft e}: \mathbb{R}^d \rightarrow \mathbb{R}^d $. 
     Hypergraph Networks implicitly minimize an energy function defined on the vertex space, aiming to bring together the representation of the neighbouring nodes (\textbf{left}). Differently, our Sheaf Hypergraph Networks aim to reduce the discrepancy between the neighbouring representations in the hyperedge space (\textbf{right}). This has several advantages. Firstly, we prevent the features from becoming uniform by minimizing the distance in a more complex space, as confirmed both theoretically and empirically. Moreover, in the hyperedge space, each node can have a different representation for each hyperedge it is part of, leading to a more expressive and general message passing framework.
    }
    \label{fig:main_fig}
    \vspace{5mm}
\end{figure} 

\section{Proofs}

\subsection{Proof of Proposition 1}
\begin{proposition}[]
The diffusion process using a symmetric normalised version of the linear sheaf hypergraph Laplacian minimizes the sheaf Dirichlet energy of a signal $x$ on a hypergraph $\mathcal{H}$. Moreover, the energy decreases with each layer of diffusion. 
\end{proposition}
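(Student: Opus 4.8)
The plan is to collapse the statement into a one–line spectral estimate, once the sheaf Dirichlet energy is recognised as the Rayleigh quadratic form of $\Delta^{\mathcal{F}}$. First I would prove the variational identity $E^{\mathcal{F}}_{L_2}(x) = \langle x, \Delta^{\mathcal{F}}x\rangle$, viewing $x$ as a vector in $\mathbb{R}^{nd}$. Writing $z_v^e := \mathcal{F}_{v \tri e}x_v$ for the image of node $v$ in the stalk of $e$ and expanding each squared norm $\|z_v^e - z_u^e\|_2^2$, the squared-norm terms assemble the diagonal blocks $\tfrac{1}{\delta_e}\mathcal{F}^T_{v \tri e}\mathcal{F}_{v \tri e}$ and the cross terms $z_v^{eT}z_u^e$ assemble the off-diagonal blocks $-\tfrac{1}{\delta_e}\mathcal{F}^T_{v \tri e}\mathcal{F}_{u \tri e}$; pairing the result against $x_v$ and summing over $v$ exactly reproduces $\sum_v x_v^T\,\mathcal{L}^{\mathcal{F}}(x)_v$ in the operator form of the linear sheaf hypergraph Laplacian. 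Because the normaliser $D_v^{-1/2}$ sits \emph{inside} the energy, the whole expression is the unnormalised energy evaluated at $y = D^{-1/2}x$, so that $E^{\mathcal{F}}_{L_2}(x) = \langle D^{-1/2}x, \mathcal{L}^{\mathcal{F}}D^{-1/2}x\rangle = \langle x, \Delta^{\mathcal{F}}x\rangle$. This also exhibits $Y = (I-\Delta^{\mathcal{F}})X$ as a gradient-descent step on $E^{\mathcal{F}}_{L_2}$, which is the sense in which diffusion \emph{minimizes} it. The only genuinely mechanical part here is the bookkeeping of the clique weights $1/\delta_e$ and of the block degrees $D_v = \sum_{e; v \in e}\mathcal{F}^T_{v \tri e}\mathcal{F}_{v \tri e}$.

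Next I would record that $\Delta^{\mathcal{F}}$ is symmetric and positive semi-definite: symmetric because each block $\mathcal{F}^T_{u \tri e}\mathcal{F}_{v \tri e}$ is carried into its transpose $\mathcal{F}^T_{v \tri e}\mathcal{F}_{u \tri e}$, and positive semi-definite because the quadratic form it equals is a sum of squared norms, so $\langle x,\Delta^{\mathcal{F}}x\rangle \ge 0$. Hence $\Delta^{\mathcal{F}}$ admits an orthonormal eigenbasis $\{\phi_i\}$ with eigenvalues $\lambda_i \ge 0$. Expanding each column of $X$ as $\sum_i c_i\phi_i$, the diffused signal is $Y = (I-\Delta^{\mathcal{F}})X = \sum_i (1-\lambda_i)c_i\phi_i$, and the Rayleigh identity from the first step gives $E^{\mathcal{F}}_{L_2}(X) = \sum_i \lambda_i c_i^2$ together with $E^{\mathcal{F}}_{L_2}(Y) = \sum_i \lambda_i(1-\lambda_i)^2 c_i^2$. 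The modes with $\lambda_i = 0$ drop out of both sums, so bounding $(1-\lambda_i)^2 \le \lambda_* := \max_{\lambda_i \ne 0}(1-\lambda_i)^2$ on the survivors yields $E^{\mathcal{F}}_{L_2}(Y) \le \lambda_* E^{\mathcal{F}}_{L_2}(X)$ at once.

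The crux, and what actually secures the per-layer decrease, is the bound $\lambda_* < 1$, i.e. that every nonzero eigenvalue lies strictly in $(0,2)$. The lower end is free from positive semi-definiteness; the upper bound $\lambda_i < 2$ is the real obstacle. I would obtain it by showing $2I - \Delta^{\mathcal{F}} \succeq 0$, equivalently that the signless energy $\tfrac{1}{2}\sum_e\tfrac{1}{\delta_e}\sum_{u,v \in e}\|\mathcal{F}_{v \tri e}D_v^{-1/2}x_v + \mathcal{F}_{u \tri e}D_u^{-1/2}x_u\|_2^2$ is nonnegative (immediate, being a sum of squares), and then arguing the inequality is strict under the hypergraph's connectivity so that no eigenmode attains the bipartite-type boundary value $2$. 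Granting $\lambda_* < 1$, the single-step estimate gives $E^{\mathcal{F}}_{L_2}(Y) \le \lambda_* E^{\mathcal{F}}_{L_2}(X) < E^{\mathcal{F}}_{L_2}(X)$, and iterating over layers produces geometric decay of the energy. I would also be careful to read the stated strict bound $E^{\mathcal{F}}_{L_2}(Y) < \lambda_* E^{\mathcal{F}}_{L_2}(X)$ together with $\lambda_* < 1$, since equality in the $\lambda_*$ step can occur only in the degenerate case where $X$ is supported entirely on the extremal eigenspace, whereas the conclusion that matters — strict decrease at every layer — follows from $\lambda_* < 1$ regardless.
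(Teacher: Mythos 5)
Your overall skeleton is the same as the paper's: rewrite $E^{\mathcal{F}}_{L_2}(x)$ as the quadratic form $\langle x,\Delta^{\mathcal{F}}x\rangle$, expand in an orthonormal eigenbasis, and contract each nonzero mode by $(1-\lambda_i)^2\le\lambda_*$. The genuine gap is in the step you yourself flag as the crux: establishing $\lambda_*<1$. You plan to prove only $\lambda_i\le 2$ via positivity of a signless energy and then to rule out the boundary value $2$ ``under the hypergraph's connectivity.'' That last step is not available: the proposition carries no connectivity hypothesis, and even for connected structures the normalized graph Laplacian attains eigenvalue $2$ on bipartite components, so connectivity alone cannot deliver strictness --- you would need a non-bipartiteness-type condition, which for arbitrary restriction maps $\mathcal{F}_{v\tri e}$ is neither stated nor easy to formulate. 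As written, the strict per-layer decrease therefore rests on an unproven claim.

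The paper avoids this entirely by proving the sharper bound $\lambda_i\in[0,1]$: in the Rayleigh quotient it bounds each cross term by $-\langle a,b\rangle\le\tfrac12(\|a\|^2+\|b\|^2)$, and the resulting $\delta_e$ copies of $\|\mathcal{F}_{v\tri e}D_v^{-1/2}x_v\|^2$ cancel the $1/\delta_e$ weight exactly, leaving $\sum_v x_v^TD_v^{-1/2}\bigl(\sum_{e\ni v}\mathcal{F}^T_{v\tri e}\mathcal{F}_{v\tri e}\bigr)D_v^{-1/2}x_v=\|x\|^2$ because $D_v=\sum_{e\ni v}\mathcal{F}^T_{v\tri e}\mathcal{F}_{v\tri e}$ by definition. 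Then every nonzero $\lambda_i$ lies in $(0,1]$, so $(1-\lambda_i)^2<1$ with no strictness worry. It is worth noting that your own signless computation, if you track the weights carefully, already gives this: the diagonal of $\Delta^{\mathcal{F}}$ carries $1/\delta_e$ while the normalizer $D_v$ does not, and the mismatch yields $\langle x,(2I-\Delta^{\mathcal{F}})x\rangle=(\text{signless energy})+\|x\|^2\ge\|x\|^2$, i.e.\ $\lambda_i\le 1$ directly. So the repair is within reach of your method, but the proof as proposed does not close.
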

\begin{proof}

We will first demonstrate that a single layer of the diffusion process $Y = (I - \Delta^{\mathcal{F}})X$ using symmetric normalised version of the linear sheaf hypergraph Laplacian $\Delta^{\mathcal{F}} = D^{-\frac{1}{2}}\mathcal{L^F}D^{-\frac{1}{2}}  \in \mathbb{R}^{nd \times nd}$ reduces the sheaf Dirichlet energy as follow: $E^{\mathcal{F}}_{L_2}(Y) \leq \lambda_{*}E^{\mathcal{F}}_{L_2}(X)$, with $\{\lambda_i\}$ the eigenvalues of $\Delta^{\mathcal{F}}$ and $\lambda_* = \max_{i; \lambda_i \neq 0}{\{(1-\lambda_i)^2\}} < 1$. In this manner, as the number of layers approaches infinity, the energy will converge to the minimum energy of $0$.

To prove that $E^{\mathcal{F}}_{L_2}(Y) \leq \lambda_{*}E^{\mathcal{F}}_{L_2}(X)$ we employ the same technique used to demonstrate that the energy associated with the standard hypergraph diffusion decreases with each layer~\cite{oversmoothing_hyper}. We will first rewrite the energy as a quadratic term, then decompose it in the eigenvector space and bound it using the eigenvalues upper bounds.

We initiate our proof by establishing a series of Lemmas that will be utilized in our argument.
\begin{lemma} \label{dirich_rephrase} $E^{\mathcal{F}}_{L_2}(x) =  \frac{1}{2}\sum_e \frac{1}{\delta_e}\sum_{u,v \in e}||\mathcal{F}_{v \tri e}D_v^{-\frac{1}{2}}x_v -\mathcal{F}_{u \tri e}D_u^{-\frac{1}{2}}x_u||_2^2 = x^T \Delta^{\mathcal{F}} x $
\end{lemma}
\begin{proof}
    \begin{align*}
x^T\Delta^{\mathcal{F}}x &= x^TD^{-\frac{1}{2}}\mathcal{L^F}D^{-\frac{1}{2}}x \\
&= \sum_e \frac{1}{\delta_e}\Big( \sum_{v \in e} x_v^T D_v^{-\frac{1}{2}} \mathcal{F}^T_{v \tri e}  \mathcal{F}_{v \tri e} D_v^{-\frac{1}{2}} x_v 
- 
\sum_{\substack{w,z \in e\\ w \neq z}} x_w^T D_w^{-\frac{1}{2}} \mathcal{F}^T_{w \tri e}  \mathcal{F}_{z \tri e} D_z^{-\frac{1}{2}} x_z \Big) \\
&= \frac{1}{2}\sum_e \frac{1}{\delta_e}\Big( \sum_{w \in e} x_w^T D_w^{-\frac{1}{2}} \mathcal{F}^T_{w \tri e}   \mathcal{F}_{w \tri e} D_w^{-\frac{1}{2}} x_w 
+ \sum_{z \in e} x_z^T D_z^{-\frac{1}{2}} \mathcal{F}^T_{z \tri e} \mathcal{F}_{z \tri e} D_z^{-\frac{1}{2}} x_z \\
&\hspace{10mm}- \sum_{\substack{w,z \in e\\ w \neq z}} x_w^T D_w^{-\frac{1}{2}} \mathcal{F}^T_{w \tri e}  \mathcal{F}_{z \tri e} D_z^{-\frac{1}{2}} x_z
- \sum_{\substack{w,z \in e\\ w \neq z}} x_z^T D_z^{-\frac{1}{2}} \mathcal{F}^T_{z \tri e} D_e^{-1}  \mathcal{F}_{w \tri e} D_w^{-\frac{1}{2}} x_w \Big) 
\end{align*}

\begin{align*}
&\hspace{-10mm} = \frac{1}{2}\sum_e \frac{1}{\delta_e}\sum_{w,z \in e} \Big(x_w^T D_w^{-\frac{1}{2}} \mathcal{F}^T_{w \tri e}- x_z^T D_z^{-\frac{1}{2}} \mathcal{F}^T_{z \tri e} \Big) \Big( \mathcal{F}_{w \tri e} D_w^{-\frac{1}{2}}x_w
-   \mathcal{F}^T_{z \tri e} D_z^{-\frac{1}{2}} x_z \Big) \\
&\hspace{-10mm} =\frac{1}{2}\sum_e \frac{1}{\delta_e}\sum_{w,z \in e} \Big(   \mathcal{F}_{w \tri e} D_w^{-\frac{1}{2}}x_w
-\mathcal{F}_{z \tri e} D_z^{-\frac{1}{2}} x_z \Big)^T\Big(   \mathcal{F}_{w \tri e} D_w^{-\frac{1}{2}}x_w
-  \mathcal{F}_{z \tri e} D_z^{-\frac{1}{2}} x_z \Big) \\
&\hspace{-10mm} = \frac{1}{2}\sum_e \frac{1}{\delta_e}\sum_{w,z \in e} ||(\mathcal{F}_{v \tri e}D_v^{-\frac{1}{2}}x_v -\mathcal{F}_{u \tri e}D_u^{-\frac{1}{2}}x_u)||_2^2
\end{align*}
\end{proof}

\begin{lemma} \label{lemma_eigenval} The eigenvalues of the symmetric normalised linear sheaf Laplacian $\Delta^{\mathcal{F}}$ are in $[0,1]$.
\end{lemma}

\begin{proof}
Let's denote by $\{\lambda_i\}$ the set of eigenvalues of $\Delta^{\mathcal{F}} \in \mathbb{R}^{nd \times nd}$. We want to prove that $\lambda_i \in [0,1]$.
Let's consider $\lambda_1 \geq \lambda_2 \dots \lambda_{nd}$.

\begin{align*}
    \lambda_1 &= \max_{x} \frac{<x, \Delta^{\mathcal{F}} x>}{<x,x>} \\
                &= \max_x \frac{\frac{1}{2}\sum_{e}{\frac{1}{\delta_e}} \sum_{u,v \in e} ||\mathcal{F}_{v \triangleleft e} D_v^{-\frac{1}{2}} x_v - \mathcal{F}_{u \triangleleft e} D_u^{-\frac{1}{2}} x_u||_2^2}{<x,x>} \\
                &=\max_x \frac{\sum_{e}{\frac{1}{\delta_e}}(\sum_v || \mathcal{F}_{v \triangleleft e} D_v^{-\frac{1}{2}} x_v ||_2^2 - \sum_ {u,v \in e; u \neq v} <\mathcal{F}_{v \triangleleft e} D_v^{-\frac{1}{2}} x_v, \mathcal{F}_{u \triangleleft e} D_u^{-\frac{1}{2}} x_u>)}{<x,x>} \\ 
                &\leq \max_x  \frac{\sum_{e}{\frac{1}{\delta_e}}(\sum_v || \mathcal{F}_{v \triangleleft e} D_v^{-\frac{1}{2}} x_v ||_2^2 + \frac{1}{2} \sum_ {u,v \in e; u \neq v} (||\mathcal{F}_{v \triangleleft e} D_v^{-\frac{1}{2}} x_v||_2^2 + || \mathcal{F}_{u \triangleleft e} D_u^{-\frac{1}{2}} x_u ||_2^2))}{<x,x>} \\ 
                &\leq  \max_x  \frac{\sum_{e}\frac{1}{\delta_e}(\sum_v || \mathcal{F}_{v \triangleleft e} D_v^{-\frac{1}{2}} x_v ||_2^2 + \frac{1}{2} 2 (\delta_e-1)\sum_v ||\mathcal{F}_{v \triangleleft e} D_v^{-\frac{1}{2}} x_v||_2^2)}{<x,x>} \\
                &\leq \max_x  \frac{\sum_{v}\sum_e x_v^T D_v^{-\frac{1}{2}} \mathcal{F}_{v \triangleleft e}^T \mathcal{F}_{v \triangleleft e} D_v^{-\frac{1}{2}} x_v}{<x,x>} \\
                &\leq \max_x  \frac{\sum_{v}x_v^T D_v^{-\frac{1}{2}} \sum_e(\mathcal{F}_{v \triangleleft e}^T \mathcal{F}_{v \triangleleft e}) D_v^{-\frac{1}{2}} x_v }{<x,x>} \\
                &\leq \max_x  \frac{\sum_{v}x_v^T x_v }{<x,x>} = 1
\end{align*}
Since $\lambda_k = \frac{<v_k, \Delta^{\mathcal{F}} v_k>}{<v_k,v_k>} \geq 0$ we conclude that $1 \geq \lambda_1 \geq \lambda_2 .. \geq \lambda_{nd} \geq 0$ 
\end{proof}
The Lemma above generalises the result characterising the spectrum of hypergraph Laplacian~\cite{bound_eigenvalues} (Theorem 3.2), extending it for the spectrum of sheaf hypergraph Laplacian.

We will now head to prove the main result: $E^{\mathcal{F}}_{L_2}(Y) \leq \lambda_{*}E^{\mathcal{F}}_{L_2}(X)$

    Let's consider $(\lambda_i, v_i)$ the eigenvalue-eigenvector pairs for $\Delta^{\mathcal{F}}$.
    We can decomposed the hypergraph signal $x \in \mathbb{R}^{nd \times 1}$ in the eigenvector basis as $x=\sum_i c_i v_i$ for some coefficients $c_i$. 
    
    From Lemma~\ref{dirich_rephrase} we have that $E^{\mathcal{F}}_{L_2}(x) = x^T  \Delta^{\mathcal{F}} x$ so we can further decomposed it in the eigenvector basis as follow:
    
    \begin{align*}
    E^{\mathcal{F}}_{L_2}(x) &= x^T  \Delta^{\mathcal{F}} x = x^T \sum_i c_i \lambda_i v_i \\
        &= \sum_i c_i^2 \lambda_i v_i
    \end{align*}

    On the other hand we have that $E^{\mathcal{F}}_{L_2}((I - \Delta^{\mathcal{F}})x) = x^T  (I - \Delta^{\mathcal{F}})^T \Delta^{\mathcal{F}} (I - \Delta^{\mathcal{F}}) x$
    
    $\Delta^{\mathcal{F}}$ has $(\lambda_i, v_i)$ as eigenvalues-eigenvectors pairs, thus $(I - \Delta^{\mathcal{F}})$  will have the same set of eigenvectors $v_i$ with corresponding eigenvalues $(1-\lambda_i)$. Thus we can decomposed the sheaf Dirichlet energy as:

    \begin{align*}
        E^{\mathcal{F}}_{L_2}((I - \Delta^{\mathcal{F}})x) &= x^T  (I - \Delta^{\mathcal{F}})^T \Delta^{\mathcal{F}} (I - \Delta^{\mathcal{F}}) x \\
            &= \sum_i c_i^2 \lambda_i (1-\lambda_i)^2 v_i
    \end{align*}

    Let's denote by $\lambda_* = max_{i; \lambda_i \neq 0}\{(1-\lambda_i)^2\} <1$. Then:
    \begin{align*}
        \label{ineq}
        E^{\mathcal{F}}_{L_2}((I - \Delta^{\mathcal{F}})x) &= \sum_i c_i^2 \lambda_i (1-\lambda_i)^2 v_i \\
        & \leq \lambda_* \sum_i c_i^2 \lambda_i v_i = \lambda_* E^{\mathcal{F}}_{L_2}(x) 
    \end{align*}

    From Lemma~\ref{lemma_eigenval} we have that $\lambda_i \in [0,1] \Rightarrow \lambda_* < 1$ 
    
    We conclude that:
    
    \begin{equation*}
    E^{\mathcal{F}}_{L_2}((I - \Delta^{\mathcal{F}})x) \leq \lambda_*E^{\mathcal{F}}_{L_2}(x) < E^{\mathcal{F}}_{L_2}(x)
    \end{equation*}

\end{proof}

\subsection{Proof of Proposition 2}

\begin{proposition}[]
 The diffusion process using symmetric normalised version of non-linear sheaf hypergraph Laplacian minimizes the sheaf total-variance of a signal $x$ on hypergraph $\mathcal{H}$. 
\end{proposition}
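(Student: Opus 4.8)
The plan is to reduce the non-linear statement to the linear one (Proposition~1) by exploiting the fact that, at any fixed signal $X$, the non-linear sheaf hypergraph Laplacian is literally a linear sheaf Laplacian on the graph $\mathcal{G}_H$ produced by the most-discrepant-pair selection. First I would freeze $X$, let $(u_e,v_e)$ be the maximising pair in each hyperedge $e$, and observe that with this selection fixed the operator $\bar{\mathcal{L}}^{\mathcal{F}}$ of Eq.~\ref{eq_def_non_linear_sheaf_lap} has diagonal blocks $\sum_{e;\,u\sim_e v}\frac{1}{\delta_e}\mathcal{F}^T_{v\tri e}\mathcal{F}_{v\tri e}$ and off-diagonal blocks $-\frac{1}{\delta_e}\mathcal{F}^T_{u\tri e}\mathcal{F}_{v\tri e}$, i.e.\ it is exactly the linear sheaf Laplacian of $\mathcal{G}_H$ with edge weights $1/\delta_e$, while the normaliser $D$ is the same hypergraph degree used in the definition of $\bar{E}^{\mathcal{F}}_{TV}$.

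With this reduction in hand I would reprove the two lemmas of Proposition~1 in the restricted setting. The analogue of Lemma~\ref{dirich_rephrase} follows from the identical algebraic manipulation, now run over the single selected pair per hyperedge, giving $X^T\bar{\Delta}^{\mathcal{F}}X = \frac{1}{2}\sum_e \frac{1}{\delta_e}\|\mathcal{F}_{v_e\tri e}D_{v_e}^{-1/2}x_{v_e}-\mathcal{F}_{u_e\tri e}D_{u_e}^{-1/2}x_{u_e}\|_2^2$; because $(u_e,v_e)$ maximises the discrepancy in each hyperedge, the right-hand side is exactly $\bar{E}^{\mathcal{F}}_{TV}(X)$. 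The analogue of Lemma~\ref{lemma_eigenval} (spectrum of $\bar{\Delta}^{\mathcal{F}}$ contained in $[0,1]$) goes through via the same Young-type bound on the cross terms; here the bound is in fact looser, since each hyperedge now contributes a single pair weighted by $1/\delta_e\le 1$ rather than the full clique, so the telescoping identity $\sum_e\mathcal{F}^T_{v\tri e}\mathcal{F}_{v\tri e}=D_v$ still dominates and yields $\lambda_1\le 1$.

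Given these two facts, the eigen-decomposition argument of Proposition~1 transfers verbatim: writing $X=\sum_i c_i v_i$ in the eigenbasis of the frozen $\bar{\Delta}^{\mathcal{F}}$ and using that $I-\bar{\Delta}^{\mathcal{F}}$ shares these eigenvectors with eigenvalues $1-\lambda_i$, one obtains for $Y=(I-\bar{\Delta}^{\mathcal{F}})X$ that the sheaf Dirichlet energy of $\mathcal{G}_H$ at $Y$ is at most $\lambda_*\,\bar{E}^{\mathcal{F}}_{TV}(X)$ with $\lambda_*=\max_{i;\lambda_i\neq 0}(1-\lambda_i)^2<1$. Thus the energy associated with the active selection strictly decreases under one diffusion step.

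The main obstacle is the non-smoothness introduced by the maximum and the selection step: $\bar{E}^{\mathcal{F}}_{TV}$ is a pointwise maximum, over all one-pair-per-hyperedge selections $\mathcal{G}$, of the convex quadratics $E^{\mathcal{F}}_{\mathcal{G}}$, and the diffusion step only controls the term $E^{\mathcal{F}}_{\mathcal{G}_H}$ corresponding to the selection active at $X$. Since $\bar{E}^{\mathcal{F}}_{TV}(Y)=\max_{\mathcal{G}}E^{\mathcal{F}}_{\mathcal{G}}(Y)\ge E^{\mathcal{F}}_{\mathcal{G}_H}(Y)$, a naive chaining gives the wrong direction, so the update has to be read as subgradient descent on the convex functional $\bar{E}^{\mathcal{F}}_{TV}$: by Danskin's theorem $2\bar{\Delta}^{\mathcal{F}}X$ is a subgradient of $\bar{E}^{\mathcal{F}}_{TV}$ at $X$ (precisely because $\mathcal{G}_H$ attains the maximum there), hence the step moves along a descent direction for the total variation. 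I would close the argument either by invoking the standard convergence of subgradient descent to the minimiser of a convex functional, or, for a genuine per-step decrease, by noting that the active selection is locally constant away from ties, so that in a neighbourhood of $X$ the total variation coincides with the smooth quadratic $E^{\mathcal{F}}_{\mathcal{G}_H}$ whose value we have just shown to drop.
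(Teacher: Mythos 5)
Your proof lands, in its decisive step, exactly where the paper's does: establish that $\bar E^{\mathcal{F}}_{TV}$ is convex as a pointwise maximum of convex quadratics, verify the identity $x^T\bar\Delta^{\mathcal{F}}x = 2\bar E^{\mathcal{F}}_{TV}(x)$ so that the frozen operator applied to $x$ is the gradient of the active quadratic, invoke the Danskin-type rule for subgradients of a maximum to handle ties, and conclude via the subgradient method. (Your constant is off by two --- the gradient of $\tfrac{1}{2}x^TAx$ is $Ax$, so the subgradient is $\bar\Delta^{\mathcal{F}}x$ rather than $2\bar\Delta^{\mathcal{F}}x$ --- but a positive multiple of a subgradient is still a descent direction, so nothing breaks.) What you add beyond the paper is the reduction of the frozen non-linear operator to a linear sheaf Laplacian on $\mathcal{G}_H$ with the resulting spectral bound $\lambda_i\in[0,1]$ and a quantitative per-step contraction of the \emph{active} quadratic $E^{\mathcal{F}}_{\mathcal{G}_H}$; the paper does not attempt this, and indeed Proposition~2, unlike Proposition~1, deliberately omits any per-layer decrease claim. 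You are also right, and more explicit than the paper, that this contraction does not chain into a decrease of $\bar E^{\mathcal{F}}_{TV}$ itself, since the maximum at $Y$ may be attained by a different selection.

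One of your two proposed closings does not work: arguing that the active selection is locally constant away from ties, so that $\bar E^{\mathcal{F}}_{TV}$ coincides with the smooth quadratic $E^{\mathcal{F}}_{\mathcal{G}_H}$ near $X$, only helps if $Y=(I-\bar\Delta^{\mathcal{F}})X$ actually lies in that neighbourhood. The diffusion step is not infinitesimal, so $Y$ may cross into a region where a different pair attains the maximum, in which case $\bar E^{\mathcal{F}}_{TV}(Y) > E^{\mathcal{F}}_{\mathcal{G}_H}(Y)$ and the claimed per-step decrease of the total variation fails. Stick with the subgradient-descent reading, which is what the paper does (and which, strictly speaking, also carries a step-size caveat that neither you nor the paper addresses).
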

\begin{proof}
Our proof is inspired by the proof in ~\cite{pmlr-v70-zhang17d}, which demonstrates that the non-linear hypergraph diffusion minimizes the total variation.

The summary of our proof is as follow: 1) we show that for a given sheaf $\mathcal{F}$ associated with a hypergraph, the sheaf total variation $\bar E_{TV}(x)$ is a convex function, and 2) the sheaf non-linear Laplacian represents a subgradient of the sheaf total variance. Since the subgradient method minimizes convex functions, we conclude that the non-linear sheaf diffusion minimizes the sheaf total variation. We explain the steps in more detail below.

The sheaf total variance is defined as follow: $\bar E^{\mathcal{F}}_{TV}(x) =  \frac{1}{2}\sum_e\frac{1}{\delta_e}\max_{u,v \in e}||{\mathcal{F}_{v \tri e}}D_v^{-\frac{1}{2}}x_v -{\mathcal{F}_{u \tri e}}D_u^{-\frac{1}{2}}x_u||_2^2$
where $D_v = \sum\limits_{e; v \in e} \mathcal{F}_{v \trianglelefteq e}^T\mathcal{F}_{v \trianglelefteq e}$.

Given a sheaf structure $\mathcal{F}$ associated with the hypergraph $\mathcal{H}$, let's denote by $g_{\mathcal{F}}(x_u, x_v) = ||\mathcal{F}_{u \tri e}D_u^{-\frac{1}{2}}x_u -\mathcal{F}_{v \tri e}D_v^{-\frac{1}{2}}x_v||$. From the triangle inequality we have that $g_{\mathcal{F}}(x_u, x_v)$ is convex since:
\begin{align*}
||{\mathcal{F}_{v \tri e}}D_v^{-\frac{1}{2}}(\theta x_v + (1-\theta) y_v) -{\mathcal{F}_{u \tri e}}D_u^{-\frac{1}{2}}&(\theta x_u + (1-\theta) y_u)|| \leq \theta ||{\mathcal{F}_{v \tri e}}D_v^{-\frac{1}{2}}x_v -{\mathcal{F}_{u \tri e}}D_u^{-\frac{1}{2}}x_u|| \\
&+ (1-\theta)||{\mathcal{F}_{v \tri e}}D_v^{-\frac{1}{2}}y_v -{\mathcal{F}_{u \tri e}}D_u^{-\frac{1}{2}}y_u|| \\
\end{align*}

The square of a convex positive function is convex. The maximum of a set of convex functions is convex. Thus: $\max_{w,z}\{g_{\mathcal{F}}(x_w, x_z))^2\} = \max_{w,z}||\mathcal{F}_{w \tri e}D_w^{-\frac{1}{2}}x_w -\mathcal{F}_{z \tri e}D_z^{-\frac{1}{2}}x_z||_2^2$ is convex. Following the same approach for all hyperedges we have that $\bar E^{\mathcal{F}}_{TV}(x) =  \frac{1}{2}\sum_e\frac{1}{\delta_e}\max_{u,v \in e}||{\mathcal{F}_{v \tri e}}D_v^{-\frac{1}{2}}x_v -{\mathcal{F}_{u \tri e}}D_u^{-\frac{1}{2}}x_u||_2^2$ is also a convex function. 

$\bar E_{TV}(x)$ is a convex, but non-differentiable function. Therefore, it can be minimized using the subgradient method. In the following, we will demonstrate that the symmetric normalized non-linear sheaf Laplacian operator is a subgradient for $\bar E_{TV}(x)$.

It is straightforward to establish that:
\begin{align*}
\bar E_{TV}(x) = \frac{1}{2}x^T \bar \Delta^{\mathcal{F}}x
\end{align*}

\begin{align*}
x^T\bar\Delta^{\mathcal{F}}x &= x^TD^{-\frac{1}{2}}\mathcal{\bar L^F}D^{-\frac{1}{2}}x \\
&= \sum_{e; w \sim_e z} \frac{1}{\delta_e}\Big( x_w^T D_w^{-\frac{1}{2}} \mathcal{F}^T_{w \tri e}  \mathcal{F}_{w \tri e} D_w^{-\frac{1}{2}} x_w 
- 
 x_w^T D_w^{-\frac{1}{2}} \mathcal{F}^T_{w \tri e}  \mathcal{F}_{z \tri e} D_z^{-\frac{1}{2}} x_z\Big)  \\
&=  \frac{1}{2}\sum_{e;  w \sim_e z} \frac{1}{\delta_e}\Big(  x_w^T D_w^{-\frac{1}{2}} \mathcal{F}^T_{w \tri e}   \mathcal{F}_{w \tri e} D_w^{-\frac{1}{2}} x_w 
+ x_z^T D_z^{-\frac{1}{2}} \mathcal{F}^T_{z \tri e} \mathcal{F}_{z \tri e} D_z^{-\frac{1}{2}} x_z \\
&\hspace{10mm}-x_w^T D_w^{-\frac{1}{2}} \mathcal{F}^T_{w \tri e}  \mathcal{F}_{z \tri e} D_z^{-\frac{1}{2}} x_z
-  x_z^T D_z^{-\frac{1}{2}} \mathcal{F}^T_{z \tri e} D_e^{-1}  \mathcal{F}_{w \tri e} D_w^{-\frac{1}{2}} x_w \Big) \\
&\hspace{-10mm} = \frac{1}{2}\sum_{e; w \sim_e z} \frac{1}{\delta_e}\Big(x_w^T D_w^{-\frac{1}{2}} \mathcal{F}^T_{w \tri e}- x_z^T D_z^{-\frac{1}{2}} \mathcal{F}^T_{z \tri e} \Big) \Big( \mathcal{F}_{w \tri e} D_w^{-\frac{1}{2}}x_w
-   \mathcal{F}^T_{z \tri e} D_z^{-\frac{1}{2}} x_z \Big) \\
&\hspace{-10mm} =\frac{1}{2}\sum_{e; w \sim_e z} \frac{1}{\delta_e} \Big(   \mathcal{F}_{w \tri e} D_w^{-\frac{1}{2}}x_w
-\mathcal{F}_{z \tri e} D_z^{-\frac{1}{2}} x_z \Big)^T\Big(   \mathcal{F}_{w \tri e} D_w^{-\frac{1}{2}}x_w
-  \mathcal{F}_{z \tri e} D_z^{-\frac{1}{2}} x_z \Big) \\
&\hspace{-10mm} = \frac{1}{2}\sum_{e; w \sim_e z} \frac{1}{\delta_e}2||\mathcal{F}_{w \tri e}D_w^{-\frac{1}{2}}x_w -\mathcal{F}_{z \tri e}D_z^{-\frac{1}{2}}x_z||_2^2\\
&\hspace{-10mm} = \sum_{e} \frac{1}{\delta_e}\max_{w,z}||\mathcal{F}_{w \tri e}D_w^{-\frac{1}{2}}x_w -\mathcal{F}_{z \tri e}D_z^{-\frac{1}{2}}x_z||_2^2\\
&\hspace{-10mm}=2\bar E_{TV}(x)
\end{align*}

When the distances $g_{\mathcal{F}}(x_w, x_z)$ are distinct for each pair $w,z \in e$, the function $\bar E_{TV}(x)$ is differentiable and $\bar\Delta^{\mathcal{F}}x$ represent its gradient (since it is the gradient of the quadratic form $\frac{1}{2}x^T\bar\Delta^{\mathcal{F}}x$). So, we only need to find the subgradient for the points where the function is not differentiable (the points where, for a hyperedge, several different pairs of nodes achieve maximum distance).

It is known that when $f$ represent the maximum over a set of convex functions $f_i$, the set of all subgradients of $f$ is determined as $ \partial f(x) = \text{conv} \cup_{\alpha \in \mathcal{A}(x)} \partial f_{\alpha}(x)$ with $\mathcal{A}(x) = \{\alpha | f_{\alpha}(x)=f(x)\}$. Intuitively, for an hyperedge $e$, if we have more than one pair of nodes achieving maximum distance, computing the derivative for any of these pairs would give us a subgradient of the function.  This means that, by following our approach of breaking the ties randomly inside $\bar\Delta^{\mathcal{F}}x$ leads to a valid subgradient for those points where the function is non-differentiable.

Given that $\bar E_{TV}(x)$ is convex non-differentiable, with the non-linear sheaf diffusion operator described by $\bar\Delta^{\mathcal{F}}x=\sum_{e; u \sim_e v}\frac{1}{\delta_e}D_v^{-\frac{1}{2}}\mathcal{F}^T_{v \trianglelefteq e}\big(\mathcal{F}_{v \trianglelefteq e}D_v^{-\frac{1}{2}}x_v-\mathcal{F}_{u \trianglelefteq e}D_u^{-\frac{1}{2}}x_u \big)$ as a subgradient, we conclude that the diffusion process using the non-linear sheaf diffusion operator minimizes the total variance $\bar E_{TV}(x)$.

\end{proof}

\section{Experimental details}
\subsection{Datasets.} We are running experiments on a set of real-world benchmarking datasets and also on a synthetic set of datasets as in~\cite{wang2022equivariant}. We provide here details about both of them.

\paragraph{Real-world  datasets.}
For the real-world datasets, we test our model on Cora, Citeseer, Pubmed, Cora-CA, and DBLP-CA from~\cite{HyperGCN}, as well as House~\cite{house_data}, Senate, and Congress~\cite{senate_congress_data}. These datasets encompass various network types, including citation networks, co-authorship networks, and political affiliation networks, featuring diverse node and hyperedge features. In co-citation networks (Cora, Citeseer, Pubmed), all documents cited by a specific document are connected through a hyperedge \cite{HyperGCN}. For co-authorship networks (Cora-CA, DBLP), all documents co-authored by a specific author form a single hyperedge \cite{HyperGCN}. Node features in both citation and co-authorship networks are represented by the bag-of-words of the associated documents, while node labels correspond to paper classes.
In the House dataset, each node represents a member of the US House of Representatives, with hyperedges grouping members belonging to the same committee.
For the Congress and Senate datasets, we adhere to the same settings as described in \cite{wang2022equivariant}. In the Congress dataset, nodes symbolize US Congress persons, and hyperedges comprise the sponsor and co-sponsors of legislative bills introduced in both the House of Representatives and the Senate. In the Senate dataset, nodes once again represent US Congress persons, but hyperedges consist of the sponsor and co-sponsors of bills introduced exclusively in the Senate. Each node in both datasets is labeled with the respective political party affiliation.

While reproducing the HyperGCN baseline as reported in~\cite{allset,wang2022equivariant}, we uncovered an essential aspect related to self-loops. The weight coefficient calculation, given by $\frac{1}{2|e|-3}$, yields a value of $-1$ for self-loops when $|e| = 1$. To rectify this, we experimented with an alternative formula: $\max(1, \frac{1}{2|e|-3}$) for determining the weight coefficient, diverging from the original implementation in the  \cite{allset} repository. By incorporating this modification, we recorded substantial enhancement in HyperGCN's performance, especially within the heterophilic setup. For fairness reasons we decided to report these results as opposed to the original ones in Table 1 of the main paper.

\paragraph{Synthetic  datasets.}
We generate the synthetic heterophilic dataset inspired by the ones introduced by~\cite{wang2022equivariant}. We generate a $5000$ nodes hypergraph using the contextual hypergraph stochastic block model~\cite{contextual_block_model,contextual_block_model2,contextual_block_model3}. Half of these nodes belong to class $0$ while the other half to class $1$, and task is formulated as a node-classification problem. 
We randomly sample $1000$ hyperedges, each with a cardinality of $15$, to create the hyperedges. Each hyperedge contains exactly $\beta$ nodes from class $0$ and $15-\beta$ nodes from class $1$. The heterophily level is computed as $\alpha = \text{min}(\beta, 15-\beta$). To create node features, we sample from a label-dependent Gaussian distribution with a standard deviation of $1$. Since the original dataset is not publicly available, we generate our own set of $7$ datasets, by varying  the heterophilic level $\alpha \in \{ 1 \dots 7 \}$.

\subsection{Implementation details.}
We define a layer of Sheaf Hypergraph Network as used in both the linear (SheafHyperGNN) and non-linear (SheafHyperGCN) version of the architecture as follow:
\begin{equation} \label{eq:model}
Y = \sigma((I_{nd}-\overset{\bullet}\Delta)(I_n \otimes W_1 )\tilde XW_2)
\end{equation}

For a hypergraph with $n$ nodes, we denote by $\tilde{X} \in \mathbb{R}^{nd \times f}$ the representations of the nodes in the vertex stalks of a $d$-dimensional sheaf. We note that instead of each node being represented as a row in the feature matrix, as is common in standard HNNs, in our SheafHNN each node is characterized by $d$ rows, resulting in a $d \times f$ feature matrix for each node.

In Equation.~\ref{eq:model} the features corresponding to each node are processed in a factorized way: $W_1 \in \mathbb{R}^{d \times d}$  is used to combine the $d$ dimensions of the vertex stalk, independently for each node and each channel, while $W_2 \in \mathbb{R}^{f \times f}$ corresponds to the usual linear projection combining the  $f$ features, independently for each node and each stalk dimension. $\sigma$ denotes ReLU non-linearity.

In our codebase, for fair comparison against other baselines, we use the same pipeline from the public repository of ~\cite{allset} and ~\cite{wang2022equivariant}. We also use some auxiliary functions from~\cite{bodnar_sheaf_diff}. All models are trained in PyTorch~\cite{pytorch}, using Adam optimizer~\cite{kingma2014adam} for 100 epochs. For all sheaf-based experiments, we observed a much faster convergence than the usual HNNs. 

The results reported in Table 1 and Table 2 of the main paper are obtained using random hyper-parameter tuning. We search the following sets of hyper-parameters:

\begin{enumerate}[leftmargin=*]
\item \textit{Stack dimension} $d$ from $1-8$
\item The sheaf structure is either \textit{shared between layers or recompute} every layer based on the intermediate representations.  We noticed that the fixed one is, in general, easier to optimize.
\item $W_1$ is either a \textit{learnable parameter or fixed} to the identity matrix.
\item The \textit{type of normalisation} used for the Laplacian is either symmetric $\Delta = D^{-\frac{1}{2}}\mathcal{L}D^{-\frac{1}{2}}$ or asymmetric $\Delta = D^{-1}\mathcal{L}$; based on the degree ($D$ is the degree matrix) or sheaf-based ($D = \sum\limits_{e; v \in e} \mathcal{F}_{v \trianglelefteq e}^T\mathcal{F}_{v \trianglelefteq e}$). In our experiments the degree-based normalisation is, in general, more stable to optimize.
\item Since none of datasets we are using has \textit{hyperedge features}, we are treating the way to generate edge features ($x_e$) as a hyperparameter. We experiment with the following approaches: a) $x_e= \bigoplus_{v\in e}x_v$; b) $x_e= \bigoplus_{v\in e}h_v$, with $h_v$ the hidden representation of the nodes c) $x_e= \bigoplus_{v\in e}\text{MLP}(x_v)$ or d) more general as in~\cite{hua2022highorder} $x_e = \sigma'\Big(M \sigma\Big(W^T \begin{bmatrix} x_{v_1} \\ 1 \end{bmatrix} \odot \dots \odot \begin{bmatrix} x_{v_k} \\ 1 \end{bmatrix}\Big)\Big)$, with $\sigma'=\text{ReLU}$ and $\sigma=\text{tanh}$.  Generally, all of them behaves on par.
\item \textit{Non-linearity} for $\mathcal{F}_{v \tri e}$ is either sigmoid or tanh.
\item \textit{Learning rate} from $\{0.1, 0.01, 0.001\}$; \textit{weight decay} from $\{0, 1e-05\}$; \textit{dropout rate} from $\{0.1, 0.2 \dots 0.9\}$
\item \textit{Number of layers} from $1-8$; \textit{hidden dimension} from $\{16, 32, 64, 128, 256, 512\}$
\end{enumerate}

\subsection{Restriction maps details.} As mentioned in the main paper, the general setup for predicting restriction maps is as follow: for each incidence pair $(v, e)$, we predict a $d \times d$ block matrix $ \mathcal{F}_{v \trianglelefteq e} = \text{MLP}(x_v || h_e) \in \mathbb{R}^{d^2}$.
We experiment with three types of $d \times d$ restriction maps: diagonal, low-rank and general.

\paragraph{General restriction maps.} In the general case, the MLP predicts $d^2$ parameters that we rearrange in a square $d \times d$ matrix representing the restriction maps.

\paragraph{Diagonal restriction maps.} For the diagonal restriction maps, the MLP used to predict the restriction maps outputs only $d$ elements. The final restriction block is obtained by creating a matrix with these $d$ elements on the diagonal. While being less expressive than the general case, the diagonal matrix is more efficient both in terms of parameters and computation.

\paragraph{Low-Rank restriction maps.} In the low-rank case, to predict a rank-$r$ matrix, $2*d*r+d$ elements are predicted. They are rearranged in two matrices $A \in \mathbb{R}^{d \times r}$, $B \in \mathbb{R}^{r \times d}$ and a vector $c \in \mathbb{R}^{d \times 1}$. The final restriction matrix is obtained as $AB^T+diag(c)$. When $r < (d-1)/2$, the low-rank represent a more efficient option in terms of parameters. 

In experiments, we observed that models using the diagonal version of the restriction map consistently outperform those using either low-rank or general versions. We believe that finding better, more efficient ways of predicting and optimizing the restriction maps might further improve the results presented in this paper.

\subsection{Non-linear sheaf Laplacian with mediators.}
The non-linear hypergraph Laplacian relies on a much sparser representation compared to the linear hypergraph Laplacian. While the linear one creates $\frac{|e|(|e|-1)}{2}$ edges for each hyperedge, the non-linear one only draws a single connection for each hyperedge. To allow each node to participate in the hypergraph diffusion, while maintaining a sparse representation, ~\cite{nonlin_lap_mediators} introduces a variation of the non-linear Laplacian that propagates flow through all the nodes in a hyperedge and still preserves the theoretical properties of the original non-linear Laplacian. 

\begin{definition} Based on their formulation, we define the \textit{non-linear sheaf hypergraph Laplacian with mediators} as follow:

\begin{enumerate}
    \item For each hyperedge $e$, compute  $(u_e, v_e) = argmax_{u,v \in e} ||\mathcal{F}_{u \triangleleft e}x_u - \mathcal{F}_{v \triangleleft e}x_v||$, the set of pairs containing the nodes with the most discrepant features in the hyperedge stalk \footnote{Note that for the normalised version of the Laplacian $x_u\rightarrow D_u^{-\frac{1}{2}}x_u$} and the set of mediators $K_e=\{k\in e: k \neq u_e, k \neq v_e\}$
    \item Build an undirected graph $\mathcal{G}_H$ containing the same sets of nodes as $\mathcal{H}$  and, for each hyperedge $e$ connects the most discrepant nodes $(u, v)$, and also add a connection between $\{u,v\}$ and all the nodes in $K_e$. (from now on we will write $u \sim_e v$ if they are connected in the $\mathcal{G}_H$ graph due to the hyperedge e). If multiple pairs have the same maximum discrepancy, we will randomly choose one of them.  
    \item Define the sheaf non-linear hypergraph Laplacian as:
        \begin{equation} \label{eq_def_non_linear_sheaf_lap}
        \mathcal{\bar L_F}(x)_v = \sum_{e; u \sim_e v}\frac{1}{\delta_e}\mathcal{F}^T_{v \trianglelefteq e}\big(\mathcal{F}_{v \trianglelefteq e}x_v-\mathcal{F}_{u \trianglelefteq e}x_u \big)
        \end{equation}
\end{enumerate}
\end{definition}
We note that in the non-linear Laplacian with mediators, all the nodes in a hyperedge  have an active role in the diffusion. This approach requires a linear number of edges, remaining more efficient than the linear version of the Laplacian.

We use this approach in all the experiments involving non-linear sheaf hypergraph Laplacian.

\section{Additional experiments}

\paragraph{Full results for the comparison with baselines.} Due to the space contraints in the main paper (Table $3$), we only report the average accuracy for the experiments performed on the Synthetic heterophilic datasets (with $\alpha = 7$). In Table~\ref{tab:hetero_dataset_full} of the Appendix, we include both the average performance and the standard deviation obtained across 10 random splits. Both our models, SheafHyperGNN and SheafHyperGCN, outperform their counterparts, with SheafHyperGNN achieving the best results among all the baselines. We note that on the synthetic dataset, HyperGCN is very unstable during training, with a high standard deviation between the runs. We believe this is due to the noisy features that negatively and irreversibly affect the creation of the non-linear Laplacian. In contrast, our generalization achieves more stable results.

\begin{table}[t]
    \tiny
    \centering
    \caption{\textbf{Average accuracy and standard deviation on Synthetic Datasets with Varying Heterophily Levels}: Across all different level of heterophily, the sheaf-based methods consistently outperform their counterparts. Additionally, they achieve top results for all heterophily levels, further demonstrating their effectiveness. \\}
  \label{tab:hetero_dataset_full}
  \begin{tabular}{cccccccc}
    \toprule
    & \multicolumn{7}{c}{heterophily ($\alpha$)}\\
     Name      & $1$ &  $2$ & $3$ &  $4$ &  $5$ &  $6$ & $7$  \\
    \midrule
    HyperGCN  & 83.9 \pm 18.6  &	69.4 \pm 13.0 & 72.9 \pm 10.7 & 75.9 \pm 1.0 & 70.5 \pm 9.7 & 67.3 \pm 12.1 & 66.5 \pm 10.8\\ 
    HyperGNN  &  98.4 \pm 0.3 &  83.7 \pm 0.6 & 79.4 \pm 0.9 & 74.5 \pm 0.8 & 69.5 \pm 0.9 & 66.9 \pm 1.13 & 63.8 \pm 1.1\\ 
    HCHA  &  98.1 \pm 0.6 &  81.8 \pm 1.3 & 78.3 \pm 1.4 & 75.9 \pm 1.3 & 74.1 \pm 1.6 & 71.1 \pm 1.4 & 70.8 \pm 1.0 \\ 
    ED-HNN & 99.9 \pm 0.1 & 91.3 \pm 3.0 & 88.4 \pm 1.5 & 84.1 \pm 2.8 & 80.7 \pm 1.2 & 78.8 \pm 0.9 & 76.5 \pm 1.4 \\ 
    \midrule
    SheafHGCN  &  \textbf{100 \pm 0.0} &	87.1 \pm 2.4 & 84.8 \pm 1.1 & 79.2 \pm 2.0 & 78.1 \pm 0.6  & 76.6 \pm 1.1 & 75.5 \pm 1.4\\ 
    SheafHGNN  &  \textbf{100 \pm 0.0 } &	\textbf{94.2 \pm 0.9} & \textbf{90.8 \pm 1.1} & \textbf{86.5 \pm 1.0} & \textbf{82.1 \pm 1.2} & \textbf{79.8 \pm 0.7} & \textbf{77.3 \pm 1.3} \\ 
    \bottomrule  
    
  \end{tabular}%
\end{table}

\paragraph{Influence of Depth.} We present the results for varying the number of layers for both SheafHyperGNN and SheafHyperGCN. The observations remain consistent for both the models using linear and non-linear sheaf Laplacian: the performance of traditional HyperGNN and HyperGCN generally decreases when increasing the depth of the model, a phenomenon known in the literature as over-smoothing. On the other hand, the results show that our generalized versions of the model, SheafHyperGNN and SheafHyperGCN respectively, do not suffer from this limitation, allowing for deeper architectures without sacrificing performance.

\begin{figure}[h!]
    \centering
    \includegraphics[scale=0.1]{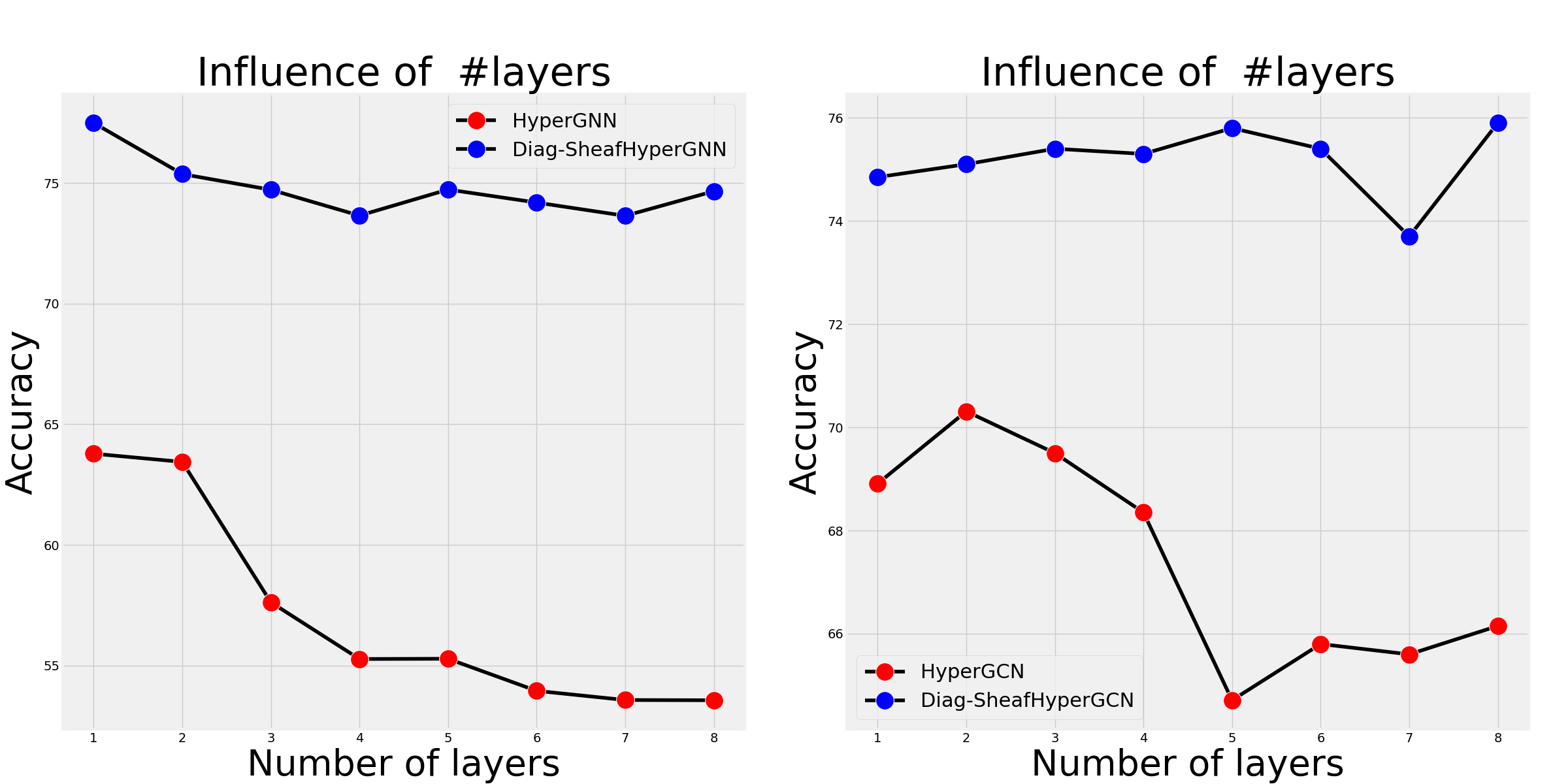}
    \caption{\textbf{Impact of Depth} for  both the linear (\textbf{left}) and the non-linear (\textbf{right}) sheaf hypergraph model, evaluated on the heterophilic dataset ($\alpha~=~7$). Both SheafHyperGNN and SheafHyperGCN's performance are generally unaffected by increasing depth, while the conventional HyperGNN and HyperGCN's performance tend to decrease when more than 2 layers are used.}
    \label{fig:main_fig}
\end{figure}

\end{document}